\newtheorem{definition}{Definition}
\newtheorem{lemma}[definition]{Lemma}
\newtheorem{theorem}[definition]{Theorem}
\newtheorem{corollary}[definition]{Corollary}
\tikzstyle{reduction} = [rectangle, draw, fill=blue!20,  align=center,
\tikzstyle{reductiongreen} = [rectangle, draw=red!50!green!100, fill=red!40!green!40, align=center,
\tikzstyle{reductionpurple} = [rectangle, draw=purple, fill=purple!40, align=center,
\tikzstyle{reductionblue} = [rectangle, draw=blue, fill=blue!20, align=center,
\tikzstyle{application} = [rectangle, draw, fill=red!20, align=center,
\tikzstyle{application2} = [rectangle, draw, fill=black!20, align=center,
\tikzstyle{line} = [draw, -latex']
\tikzstyle{setting} = [rectangle, draw, fill=blue!20, align=center,
\tikzstyle{bound} = [rectangle, draw, fill=red!20, align=center,
\tikzstyle{thickline} = [draw, -latex', thick]
\newcommand{\argmin}{\mathop{\text{argmin}}}
\newcommand{\ol}{\mathcal{A}}
\newcommand{\bol}{\mathcal{A}_{B}}
\newcommand{\onedol}{\mathcal{A_{\text{1D}}}}
\newcommand{\wealth}{\text{Wealth}}
\newcommand{\w}{\mathring{w}}
\renewcommand{\v}{\mathring{v}}
\newcommand{\trunc}{\text{trunc}}
\newcommand{\G}{\mathfrak{G}}
\newcommand{\algname}{\textsc{Leashed}}
\newcommand{\field}[1]{\mathbb{#1}}
\newcommand{\R}{\field{R}}
\newcommand{\E}{\field{E}}
\title{Artificial Constraints and Lipschitz Hints for Unconstrained Online Learning}
\author{Ashok Cutkosky\\
Google\\
\texttt{ashok@cutkosky.com}
}
\date{}
\begin{document}

\maketitle

\begin{abstract}%
We provide algorithms that guarantee regret $R_T(\w)\le \tilde O(G\|\w\|^3 +  G(\|\w\|+1)\sqrt{T})$ or $R_T(\w)\le \tilde O(G\|\w\|^3T^{1/3} + GT^{1/3}+ G\|\w\|\sqrt{T})$ for online convex optimization with $G$-Lipschitz losses for any comparison point $\w$ without prior knowledge of either $G$ or $\|\w\|$. Previous algorithms dispense with the $O(\|\w\|^3)$ term at the expense of knowledge of one or both of these parameters, while a lower bound shows that some additional penalty term over $G\|\w\|\sqrt{T}$ is necessary. Previous penalties were exponential while our bounds are polynomial in all quantities. Further, given a known bound $\|\w\|\le D$, our same techniques allow us to design algorithms that adapt optimally to the unknown value of $\|\w\|$ without requiring knowledge of $G$.
\end{abstract}


\section{Unconstrained Online Convex Optimization}

Online convex optimization (OCO) is a popular theoretical framework for designing algorithms that operate on streams of input data \citep{shalev2011online, zinkevich2003online}. Such problems abound in today's world of extremely large datasets, and so many empirically successful algorithms are analyzed in the OCO framework (e.g. \citep{duchi10adagrad, ross2013normalized, mcmahan2013ad}). In detail, OCO is a game consisting of $T$ rounds. In each round, the learning algorithm first outputs a point $w_t$ in some Hilbert space $W$\footnote{Our results also apply in Banach spaces, but we focus our exposition on Hilbert spaces for simplicity}, and then the environment outputs a convex loss function $\ell_t:W\to \R$. The learner's goal is to minimize the \emph{regret}, which is the total loss suffered by the learner in comparison to the loss suffered at some benchmark point $\w\in W$:
\[
R_T(\w)=\sum_{t=1}^T \ell_t(w_t)-\ell_t(\w)
\]

Online learning algorithms can be naturally applied to stochastic optimization problems, in which each $\ell_t$ is an i.i.d. random variable with $\E[\ell_t] = \mathcal{L}$ for some fixed loss function $\mathcal{L}$. In this case, the online-to-batch conversion argument \citep{littlestone2014line} tells us that $\E[\mathcal{L}(\overline{w})-\mathcal{L}(\w)]\le \tfrac{\E[R_T(\w)]}{T}$, where $\overline{w}=\tfrac{\sum_{t=1}^T w_t}{t}$. Thus, we refer to $\tfrac{R_T(\w)}{T}$ as the \emph{convergence} rate. We wish to achieve a convergence rate such that $\lim_{T\to \infty} \tfrac{R_T(\w)}{T} = 0$, known as sublinear regret.

We can reduce OCO to online \emph{linear} optimization (OLO) in which each loss $\ell_t$ must be linear through the use of subgradients: if $g_t\in \partial \ell_t(w_t)$, we have
\begin{align}
R_T(\w) \le \sum_{t=1}^T \langle g_t, w_t-\w\rangle\label{eqn:linearregret}
\end{align}
Thus by supplying the linear losses $w\mapsto \langle g_t, w\rangle$ to an OLO algorithm, we obtain an OCO algorithm. Because it is often easy to compute gradients, many of the popular OCO algorithms are in fact OLO algorithms. We will also take this road, and consider exclusively the case that losses are linear functions specified by the subgradients $g_t$ so that (\ref{eqn:linearregret}) is an equality.

Our goal is to design algorithms which take no data-dependent parameters as input and yet nevertheless guarantee regret matching the minimax optimal regret for online linear optimization algorithms, sometimes called ``parameter-free'' algorithms \citep{abernethy2008optimal, mcmahan2012no, orabona2013dimension, foster2015adaptive, foster2017parameter}. Specifically, we want to obtain:
\begin{align}
R_T(\w)&=O\left(\|\w\|\sqrt{\sum_{t=1}^T \|g_t\|_\star^2\log\left(GT+1\right)}\right)\nonumber\\
&= O\left(\|\w\|G\sqrt{T\log\left(GT+1\right)}\right)\label{eqn:goal}
\end{align}
for all $\w$ simultaneously, where $G=\max \|g_t\|_\star$, $\|\cdot\|$ is some norm and $\|\cdot\|_\star$ is the dual norm. Classical gradient-descent algorithms require learning rates that are tuned to the values of $\|\w\|$ and $\|g_t\|_\star$, while parameter-free algorithms automatically adapt to these unknown parameters, and so can largely dispense with tuning.

\subsection{Previous Progress and Lower Bounds}

Previous approaches to designing parameter-free algorithms typically relax the problem by assuming either a bound $D$ on $\|\w\|$, or a bound $\G$ on $G=\max_t \|g_t\|_\star$. In the case of known value for $D$, classical approaches based on Follow-the-Regularized-Leader or Mirror Descent with strongly-convex potentials can obtain regret $O\left(D\sqrt{\sum_{t=1}^T \|g_t\|^2}\right)$ \citep{duchi10adagrad,hazan2008adaptive, orabona2016scale}. In the alternative when a bound $G\le \G$ is known, the problem seems somewhat harder. Most prior work in this setting instead obtains the slightly-less-good regret bound $R_T(\w)=\tilde O\left(\|\w\|\sqrt{\G\sum_{t=1}^T \|g_t\|_\star}\right)$ \citep{orabona2014simultaneous,orabona2017backprop,cutkosky2017online}, and the better rate of $\tilde O\left(\|\w\|\sqrt{\G^2+\sum_{t=1}^T \|g_t\|_\star^2}\right)$ has only recently been developed \citep{cutkosky2018black}.

When neither $\|\w\|$ nor $\G$ is known, prior lower bounds show that it is actually impossible to obtain the desired result \citep{cutkosky2016online,cutkosky2017online}. Instead, there is a frontier of lower-bounds trading off between the ``ideal'' $\tilde O(\|\w\|G\sqrt{T})$ term and an \emph{exponential} penalty of up to $O(\exp(T)/T)$, depending on how fast the gradients grow.

We will improve upon this background in two ways. First, observe that there is a curious asymmetry between the results for known $D$ and known $\G$. On the one hand, in the known $D$ case the algorithms do not adapt very well to the true value of $\|\w\|$, instead obtaining a regret bound proportional to $DG\sqrt{T}$. On the other hand, in the known $\G$ case, so long as $\G$ is not greater than $G\sqrt{T}$, the resulting bound depends on $\|\w\|G\sqrt{T}$. So in a sense, the known $\G$ algorithms are more robust than than the known $D$ ones. We close this gap by providing an algorithm that takes an upper bound $D$ but no bound $\G$ and maintains regret $\tilde O(\|\w\|G\sqrt{T}+DG)$. The principal technique used in this analysis will then allow us to improve in a second way: when neither $\G$ nor a bound $D$ is known, we design an algorithm that still guarantees sublinear regret and whose only $T$-dependent regret terms are linear in $\|\w\|$. 

The lower bound makes it seem that there is no hope for this second goal - but it is not so! The key observation is that the exponential lower bound applies only to algorithms which insist on a regret bound whose dependence on $\|\w\|$ is $\tilde O(\|\w\|)$. However, in practice we suspect that it is more important to maintain a $T$-dependence of $\sqrt{T}$. As a result, we will replace the aesthetically unappealing $\exp(T)/T$ term with a more palatable penalty of $G\|\w\|^3$. This new regret bound is incomparable to the previous lower bound, in the sense that neither function is dominated by the other: for very large $\|\w\|$, $\|\w\|^3 \ge \exp(T)/T$. However, we note that our new penalty is \emph{independent of $T$}, and so as $T$ becomes large compared to $\|\w\|$, our asymptotic convergence rate is guaranteed to match the optimal rate. To our knowledge, ours is the first guarantee of this kind in online learning.

\section{Overview of Techniques and Results}

For any user-specified $p$ and $k$, We will construct an online linear optimization algorithm, $\algname$, that guarantees the regret bound:
\begin{align}
\nonumber R_T(\w)&\le \tilde O\left[ \|\w\|\sqrt{\sum_{t=1}^T\|g_t\|_\star^2} +G_Tk\left(\max_{t}\frac{\sum_{t'=1}^t \|g_{t'}\|_\star}{G_t}\right)^{p}\right.\\\
&\quad\quad\left.+ G_T\min_{q\in[0,1]}\frac{\|\w\|^{1+\frac{1-q}{p}}} {k^{\frac{1-q}{p}}}\left(\frac{\sum_{t=1}^T\|g_t\|_\star}{G_T}\right)^q\right]\label{eqn:bound1}
\end{align}
where $G_t=\max_{t'\le t} \|g_{t'}\|_\star$ and $\|\w\|$ are unknown to the algorithm ahead of time and $\tilde O$ hides a logarithmic factor. Using this result, we can apply $p=1/2$ and $q=0$ to obtain the bound:
\begin{align*}
R_T(\w) &\le \tilde O\left(\|\w\|\sqrt{\sum_{t=1}^T \|g_t\|_\star^2}+G_Tk\sqrt{\max_{t}\frac{\sum_{t'=1}^{t} \|g_{t'}\|_\star}{G_t}} + G_T\frac{\|\w\|^3}{k^2}\right)\\
&=\tilde O\left((\|\w\|+k)G_T\sqrt{T} + G_T\frac{\|\w\|^3}{k^2}\right)
\end{align*}
Another interesting setting is $p=1/3$, $q=1/3$, $k=1$, which yields $\tilde O(G\|\w\|^3T^{1/3}+GT^{1/3} +\|\w\|G\sqrt{T})$ and makes the dominant $T$-dependent term $\tilde O(\|\w\|G\sqrt{T})$ without any $\|\w\|+k$ terms. 

Finally, observe that setting $p=1$, $q=0$ in (\ref{eqn:bound1}) yields an expression (up to logs) of the form $G\|\w\|^2/k + GkT + \|\w\|G\sqrt{T}$, which is reminiscent of the results in FTRL or Mirror Descent analysis with strongly-convex regularizers, in which one needs to tune the ``learning rate'' $k$ to be $O\left(\frac{1}{\sqrt{T}}\right)$ in order to obtain $O(\sqrt{T})$ regret.

The term $\max_{t}\frac{\sum_{t'=1}^{t} \|g_{t'}\|_\star}{G_t}$ may seem a little awkward. It appears for technical reasons and it is very easy to replace it with $C\sum_{t=1}^{T} \|g_{t}\|_\star$ for any user-specified constant $C$. We choose not to do this in our presentation in order to allow our regret bound to be nearly scale-free, in the sense that scaling all the gradients by any constant scales the bound by the same constant.



\subsection{Outline and Proof Steps}
First, we observe that it suffices to achieve our desired bounds in the one-dimensional case $W=\R$, as it is easy to convert any one-dimensional algorithm to a dimension-free algorithm via a recent reduction argument \citep{cutkosky2018black} (see Section \ref{sec:dimfree} for details). Our one-dimensional algorithm is then constructed via three steps:
\begin{enumerate}
    \item First, we develop an online linear optimization algorithm for a modified setup in which the algorithm tis given access to a sequence of ``hints'' $h_1\le\dots\le h_T$ such that $|g_t|\le h_t$ and $h_t$ is revealed before $g_t$. The algorithm utilizes the hints to avoid suffering any penalty for a priori unknown bounds on $G$. This step is the most technical step, although it is essentially just a careful verification that prior analysis is completely unchanged when incorporating these ``just in time'' bounds on $|g_t|$ (Section \ref{sec:step1}).
    \item Although these hints aren't actually available, we approximate them by $h_t=\max_{i<t} |g_i|$. We then analyze the error from this approximation, and show that it results in adding a penalty of $G\max_t |w_t|$ to the regret (Section \ref{sec:step2}).
    \item From the previous step, it seems that we should try to control $\max_t |w_t|$. We do this by enforcing an ``artificial constraint'': we use the constraint set reduction in \citep{cutkosky2018black} to ensure $|w_t|\le \sqrt{T}$ for all $t$. This results in good regret for all $|\w|\le \sqrt{T}$, but does not control regret for $|\w|>\sqrt{T}$. To address $|\w|> \sqrt{T}$, we then observe that $R_T(\w) \le  R_T(0) + |\w|GT\le R_T(0)  + G|\w|^3$ and use the fact that $R_T(0)$ is constant (because $|0|\le \sqrt{T}$) to conclude the desired results (Section \ref{sec:step3}).
\end{enumerate}

We point out that the second step of our proof actually gives a rather general way to convert algorithms that require bounded losses into ones that do not require bounded losses, so long as the domain $W$ is itself bounded. Thus, it is our hope that it may have broader applicability.

\section{Step 1: Unconstrained Optimization With Hints}\label{sec:step1}

Our overall approach can be viewed as a sequence of elaborate ``tricks'' designed to convert an algorithm that requires a Lipschitz bound $G$ into one that does not. Our first step in this section is to consider a slightly easier OLO game in which the algorithm is given access to ``hints'' $h_t$ that bound the next loss $g_t$. Formally, each round of the OLO game now consists of:
\begin{enumerate}
\item Learner receives hint $h_t\in \R$, with $h_t\ge h_{t-1}$.
\item Learner plays $w_t$.
\item Learner receives loss $g_t$ with $|g_t|\le h_t$.
\end{enumerate}
We will look for regret bounds that depend both on $\w$ as well as the hints, taking the form:
\begin{align*}
\sum_{t=1}^T g_t(w_t-\w)&\le R_T(\w,h_T)
\end{align*}

To accomplish this, we adapt the algorithm of \citep{cutkosky2018black} to this setting and obtain an algorithm that guarantees regret
\begin{align*}
R_T(\w)&= \tilde O\left[|\w|\max\left(h_T,\ \sqrt{\sum_{t=1}^T g_t^2}\right)\right]
\end{align*}
Previous algorithms \citep{orabona2014simultaneous, cutkosky2017online} can also be adapted to this setting, but the dependencies on $h_T$ are worse.

The algorithm (Algorithm \ref{alg:ons_1d}) operates in the coin-betting framework \citep{orabona2016coin}. A coin-betting algorithm achieves low regret by maintaining high \emph{wealth}, defined by $\wealth_T=\epsilon - \sum_{t=1}^T g_tw_t$ for some user-specified $\epsilon>0$. The wealth is increased by playing $w_t=v_t\wealth_t$ for some $v_t\in[-1,1]$, which corresponds to ``betting'' a fraction $v_t$ of $\wealth_t$ on the value of the ``coin'' $g_t$, because $\wealth_{t+1} = \wealth_t - g_tv_t\wealth_t$. Thus the problem of choosing $w_t$ reduces to the problem of choosing $v_t$. We solve this problem in the same way as \citep{cutkosky2018black} by recasting choosing $v_t$ as an online exp-concave optimization  problem in 1 dimension, and then use the Online Newton Step (ONS) algorithm \citep{hazan2007logarithmic} to optimize $v_t$.

This strategy can become nonsensical if the value of $\wealth_t$ ever becomes non-positive. To avoid this, we wish to guarantee $|v_t|< 1/|g_t|$ for all $t$, which implies $\wealth_{t+1}=(1-v_tg_t)\wealth_t> 0$. Thus, we use our ``hint'' $h_t$ and restrict the value $v_t$ to the range $[-1/2h_t, 1/2h_t]$ to achieve the desired outcome.

\begin{algorithm}[h]
\caption{Coin-Betting through ONS With Hints}
\label{alg:ons_1d}
\begin{algorithmic}[1]
{
    \Require{Initial wealth $\epsilon>0$, parameter $\alpha>0$}
    \State{{\bfseries Initialize: } $\wealth_0=\epsilon$, initial betting fraction $v_1 = 0$, initial hint $h_1$}
    \For{$t=1$ {\bfseries to} $T$}
    \State{Bet $w_t = v_t \, \wealth_{t-1} $}
    \State{Receive $g_t\le h_t$}
    \State{Receive $h_{t+1}\ge h_t$}
    \State{Update $\wealth_{t} = \wealth_{t-1} - \tilde g_t w_t $}
    \State{compute new betting fraction $v_{t+1}\in[-1/2G_{t+1},1/2G_{t+1}]$ via ONS update on losses $-\ln(1-\tilde g_tv)$}
    \State{Set $z_t = \frac{d}{dv_t}\left(-\ln(1-\tilde g_t v_t)\right)=\frac{\tilde g_t}{1-g_t v_t}$}
    \State{Set $A_{t} = 4\alpha + \sum_{i=1}^t z_i^2$}
    \State{$v_{t+1} = \max\left(\min\left(v_{t} - \frac{2}{2-\ln(3)} \frac{z_t}{A_{t}}, \frac{1}{2h_{t+1}}\right),-\frac{1}{2h_{t+1}}\right)$}
    \EndFor
}
\end{algorithmic}
\end{algorithm}

The analysis of this algorithm is nearly identical to that in \citep{cutkosky2018black}, although we reproduce the main steps for completeness. The major deviation is that we are performing ONS updates on shrinking domains $[-1/2h_t,1/2h_t]$, which we analyze carefully in appendix.

\begin{restatable}{theorem}{onshints}\label{thm:onshints}
The regret of Algorithm \ref{alg:ons_1d} is bounded by:
\begin{align*}
R_T(\w,h_T) &\le \epsilon + |\w| 
\max\left(8h_T \left(\ln \frac{16|\w| h_T\exp(\alpha/4h_T^2) \left(1+\frac{\sum_{t=1}^T g_t^2}{\alpha}\right)^{4.5}}{\epsilon}-1\right),\right.\\
&\quad\quad\left.
 2\sqrt{\sum_{t=1}^T g_t^2 \ln\left(\frac{4 \left(\sum_{t=1}^T g_t^2\right)^{10}\exp(\alpha/2h_T^2) \w^2}{\epsilon^2}+1\right)}\right)
\end{align*}
\end{restatable}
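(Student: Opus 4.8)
The plan is to reduce the regret bound to a wealth lower bound and then invoke a known duality between wealth and regret in the coin-betting framework. Recall that the algorithm sets $w_t = v_t \wealth_{t-1}$, so that $\wealth_T = \epsilon - \sum_{t=1}^T g_t w_t = \epsilon - \sum_{t=1}^T g_t v_t \wealth_{t-1}$, and telescoping gives $\wealth_T = \epsilon \prod_{t=1}^T (1 - g_t v_t)$. Taking logarithms, $\ln(\wealth_T/\epsilon) = \sum_{t=1}^T \ln(1 - g_t v_t)$. The key observation is that choosing $v_t$ to control this sum is \emph{exactly} an online optimization problem with losses $v \mapsto -\ln(1 - g_t v)$, and ONS guarantees low regret against any fixed comparator $v_\star$ because these losses are exp-concave on the relevant domain. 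So the first step is to write down the ONS regret guarantee for the shrinking-domain update, comparing against an arbitrary fixed bet $v_\star \in [-1/2h_T, 1/2h_T]$.

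Second, I would use the ONS analysis (the logarithmic-regret bound of \citep{hazan2007logarithmic}, adapted to the hint-restricted domains) to obtain a lower bound of the form $\ln(\wealth_T/\epsilon) \ge \sum_{t=1}^T \ln(1 - g_t v_\star) - (\text{ONS regret})$, where the ONS regret is $O(\ln(\sum_t z_t^2 / \alpha))$ with $z_t$ the surrogate gradients defined in the algorithm. The delicate point here, flagged by the authors as the content deferred to the appendix, is that ONS is normally analyzed on a \emph{fixed} domain, whereas here the feasible interval $[-1/2h_{t+1}, 1/2h_{t+1}]$ shrinks as the hints grow. I expect the main obstacle to be verifying that projecting onto a shrinking sequence of nested intervals does not degrade the standard ONS regret guarantee — intuitively it should only help, since a point feasible at time $T$ was feasible at all earlier times, so the comparator $v_\star$ remains available throughout, but this monotonicity must be checked carefully in the potential/telescoping argument that drives the ONS bound.

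Third, having a wealth lower bound of the form $\wealth_T \ge \epsilon \exp\!\left(\sum_t \ln(1 - g_t v_\star) - O(\ln(\cdots))\right)$, I would convert it into a regret bound. The standard coin-betting-to-regret conversion says that a guarantee $\wealth_T \ge \Psi(\sum_t -g_t)$ for a convex $\Psi$ implies $R_T(\w) \le \Psi^\star(\w) + \epsilon$ where $\Psi^\star$ is the Fenchel conjugate; equivalently, one optimizes the wealth lower bound over the free comparator $v_\star$ to read off the best achievable regret against each $\w$. Concretely I would choose $v_\star$ as a function of $\w$ (roughly $v_\star \approx \w / \text{Wealth}$-scale, but clamped into $[-1/2h_T, 1/2h_T]$), and the clamping is precisely what produces the two-branch $\max(\cdot,\cdot)$ structure in the stated bound: one branch (with the $h_T \ln(\cdots)$ term) governs the regime where the unconstrained optimal bet would exceed $1/(2h_T)$ and must be truncated, while the other branch (with the $\sqrt{\sum_t g_t^2 \ln(\cdots)}$ term) governs the interior regime where the optimum is attained.

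Finally, I would assemble the pieces: substitute the ONS regret estimate into the wealth bound, take logarithms to linearize, and solve for $R_T(\w, h_T)$, tracking the constants ($\alpha$, the $\exp(\alpha/4h_T^2)$ and $\exp(\alpha/2h_T^2)$ factors that arise from initializing $A_0 = 4\alpha$, and the polynomial-in-$\sum g_t^2$ factors from the ONS logarithmic term) to match the explicit expression in the statement. The bulk of this last step is bookkeeping rather than conceptual difficulty; the conceptual heart is entirely in steps one and two, namely the reduction to exp-concave online optimization and the careful ONS analysis on the shrinking hint-dependent domains.
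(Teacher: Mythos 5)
Your proposal follows essentially the same route as the paper: reduce regret to a wealth lower bound via Fenchel/coin-betting duality, recast the choice of betting fraction as online exp-concave optimization solved by ONS, verify that the nested shrinking domains preserve the logarithmic ONS regret (the paper does this exactly via the observation that $\v\in S_T\subset S_{t+1}$ keeps the projection step valid in the telescoping argument), and finish by taking the conjugate of the resulting wealth bound, whose two regimes give the two-branch $\max$. The only cosmetic difference is that the paper fixes a data-dependent comparator bet $\v=\sum_t g_t/(2\sum_t g_t^2+2h_T|\sum_t g_t|)$ and then conjugates, rather than choosing $v_\star$ per $\w$, but these are equivalent views of the same optimization.
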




\section{Step 2: Without Hints, Regret is Small if $\|w_t\|$ is Small}\label{sec:step2}

In this section we remove the need for externally supplied hints used in Section \ref{sec:step1}. The technique presented here may apply more generally than our present focus, so we state the technique in terms of general norms, rather than restricting to a one-dimensional problem as we do in the other sections. Specifically, we show how to convert any algorithm that uses hints $h_t$ and obtains regret $R_T(\w,h_T)$ into one that does not receive hints and obtains regret $R_T(\w,G) + G\max_t \|w_t\|$ (where again $G=\max_t \|g_t\|_\star$). The procedure is very simple: we run the algorithm using (incorrect!) hints $h_t = \max_{i< t} \|g_i\|_\star$. Whenever we observe $\|g_t\|> h_t$ (i.e. when the hint is wrong), we ``lie'' to the algorithm: we replace $g_t$ with a ``truncated gradient'' $g^{\trunc}_t = h_t\frac{g_t}{\|g_t\|}$ to make the hint correct, and then bound the error produced by this truncation. Pseudocode is provided in Algorithm \ref{alg:unknowngt}.

\begin{algorithm}[h]
\caption{Algorithm Without Hints}
\label{alg:unknowngt}
\begin{algorithmic}[1]
{
    \Require{Algorithm $\ol$ that takes hints $h_t$, initial value $\mathfrak{g}$}
    \State{{\bfseries Initialize: } initialize $h_1=\mathfrak{g}$}
    \For{$t=1$ {\bfseries to} $T$}
    \State{Get $w_t$ from $\ol$, play $w_t$}
    \State{Receive $g_t$}
    \If{$|g_t|\ge h_t$}
    \State{$g^{\trunc}_t \gets h_t\frac{g_t}{\|g_t\|_\star}$}
    \Else
    \State{$g^{\trunc}_t\gets g_t$}
    \EndIf
    \State{$h_{t+1} = \max(h_t, \|g_t\|_\star)$}
    \State{send $g^{\trunc}_t$ and $h_{t+1}$ to $\ol$}
    \EndFor
}
\end{algorithmic}
\end{algorithm}

\begin{theorem}\label{thm:unknowngt}
Suppose $\ol$ obtains $R_T(\w, h_T)$ given hints $h_1\le\dots\le h_T$. Then Algorithm \ref{alg:unknowngt} obtains
\[
R_T(\w) \le R_T(\w, \max(\mathfrak{g}, G)) + G\max_t \|w_t\| + G\|\w\|
\]
where $G=  \max_t \|g_t\|_\star$.
\end{theorem}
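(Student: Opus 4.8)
The plan is to bound the regret of Algorithm \ref{alg:unknowngt} by relating it to the regret that the subroutine $\ol$ achieves on the truncated gradient sequence $g^{\trunc}_t$, then account for the discrepancy introduced by truncation. First I would observe that by construction the hints $h_t = \max_{i<t}\|g_i\|_\star$ are nondecreasing and satisfy $\|g^{\trunc}_t\|_\star \le h_t \le h_{t+1} = \max(h_t,\|g_t\|_\star)$, so the hint sequence fed to $\ol$ is legitimate and $h_T \le \max(\mathfrak{g}, G)$. Hence $\ol$ guarantees, for the \emph{truncated} losses,
\begin{align*}
\sum_{t=1}^T \inn{g^{\trunc}_t}{w_t - \w} \le R_T(\w, \max(\mathfrak{g}, G)).
\end{align*}

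**Next I would** decompose the true regret into the truncated regret plus a truncation-error term:
\begin{align*}
R_T(\w) = \sum_{t=1}^T \inn{g_t}{w_t - \w} = \sum_{t=1}^T \inn{g^{\trunc}_t}{w_t - \w} + \sum_{t=1}^T \inn{g_t - g^{\trunc}_t}{w_t - \w}.
\end{align*}
The first sum is controlled by the display above, so the work is to bound the error term $\sum_t \inn{g_t - g^{\trunc}_t}{w_t - \w}$ by $G\max_t\|w_t\| + G\|\w\|$. I would split each inner product as $\inn{g_t - g^{\trunc}_t}{w_t} - \inn{g_t - g^{\trunc}_t}{\w}$ and handle the two pieces separately via Cauchy--Schwarz (i.e. $\inn{a}{b}\le \|a\|_\star\|b\|$).

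**The key quantitative observation** is that truncation only activates on rounds where $\|g_t\|_\star > h_t$, and on such a round $h_{t+1} = \|g_t\|_\star$ strictly exceeds $h_t$; moreover $g^{\trunc}_t$ is a scaled copy of $g_t$ so $g_t - g^{\trunc}_t = (\|g_t\|_\star - h_t)\tfrac{g_t}{\|g_t\|_\star}$, giving $\|g_t - g^{\trunc}_t\|_\star = \|g_t\|_\star - h_t = h_{t+1} - h_t$. Therefore the total truncation mass telescopes:
\begin{align*}
\sum_{t=1}^T \|g_t - g^{\trunc}_t\|_\star = \sum_{t=1}^T (h_{t+1} - h_t) \le h_{T+1} - h_1 \le G.
\end{align*}
Applying this to the comparator piece, $\sum_t \inn{g_t - g^{\trunc}_t}{\w} \le \|\w\|\sum_t \|g_t - g^{\trunc}_t\|_\star \le G\|\w\|$, and to the iterate piece, $\sum_t \inn{g_t - g^{\trunc}_t}{w_t} \le \max_t\|w_t\|\sum_t \|g_t - g^{\trunc}_t\|_\star \le G\max_t\|w_t\|$. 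Summing the three contributions yields exactly $R_T(\w, \max(\mathfrak{g}, G)) + G\max_t\|w_t\| + G\|\w\|$.

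**The main obstacle** I anticipate is the telescoping bookkeeping around $h_1 = \mathfrak{g}$: I must confirm that rounds with $\|g_t\|_\star \le h_t$ contribute nothing (since there $g^{\trunc}_t = g_t$), that the hint update never decreases $h_t$ so the increments $h_{t+1}-h_t$ are nonnegative and the telescoping upper bound is valid, and that the initial offset $\mathfrak{g}$ does not pollute the error term (it is absorbed into the $\max(\mathfrak{g}, G)$ argument of $R_T$ rather than the additive $G$ penalty, because the sum of increments starts from $h_1$). Aside from this careful accounting, every inequality is an elementary application of Cauchy--Schwarz and telescoping, so I expect no deeper difficulty.
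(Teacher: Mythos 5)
Your proposal is correct and follows essentially the same route as the paper: the same decomposition into the truncated-gradient regret (controlled by the hint guarantee of $\ol$) plus a truncation-error term, bounded via H\"older/Cauchy--Schwarz and a telescoping sum of the truncation magnitudes. The only cosmetic difference is that you telescope $h_{t+1}-h_t$ directly while the paper telescopes $G_t - G_{t-1}$ using $h_t \ge G_{t-1}$; both give the same bound of $G$.
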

\begin{proof}
First, we observe that the gradients $g^{\trunc}_t$ provided to $\ol$ do indeed respect the hints, $\|g^{\trunc}_t\|_\star \le h_t$. Thus we have:
\begin{align*}
\sum_{t=1}^T g^{\trunc}_t \cdot w_t -g^{\trunc}_t\cdot \w\le R_T(\w, h_T)
\end{align*}
Moving on to the true regret, we define $G_t=\max_{i\le t} \|g_i\|_\star$ for convenience, and then compute:
\begin{align*}
\sum_{t=1}^T g_t\cdot w_t-g_t\cdot \w&=\sum_{t=1}^T g^{\trunc}_t\cdot (w_t-\w) + (g_t-g^{\trunc}_t)\cdot (w_t-\w)\\
&\le R_T(\w, h_T) + (\|\w\| + \max_t\|w_t\|)\sum_{t=1}^T \|g_t-g^{\trunc}_t\|_\star\\
&\le R_T(\w, h_T) + (\|\w\| + \max_t\|w_t\|)\sum_{t| h_t<G_t}G_t-h_t\\
&\le R_T(\w, h_T) + (\|\w\| + \max_t\|w_t\|)\sum_{t| h_t<G_t}G_t-G_{t-1}\\
&\le R_T(\w, h_T) + (\|\w\| + \max_t\|w_t\|)G
\end{align*}
where we have observed $h_t\ge G_{t-1}$ in the second-to-last line. Now we see that $h_T=\max(\mathfrak{g}, G)$ to complete the proof.
\end{proof}

Combining this result with our Algorithm \ref{alg:ons_1d}, we obtain a regret bound of
\begin{align*}
R_T(\w) &\le \tilde O\left[\|\w\|\sqrt{\sum_{t=1}^T \|g_t\|_\star^2} + \max(\mathfrak{g}, G)\|\w\| + G\max_t \|w_t\| + \epsilon\right]
\end{align*}
where we have temporarily suppressed all logarithmic factors for ease of exposition. Intuitively, we obtain nearly the same regret guarantee as before, but suffer an additional penalty that is small so long as the $w_t$s do not grow too much.

This reduction allows us to address the asymmetry between the prior algorithms with known $D$ versus known $\G$. Specifically, when we operate within a setting with bounded diameter $D$, then $\max_T\|w_t\|\le D$. We can construct an algorithm with domain $W$ that takes hints and obtains regret that adapts to $\|\w\|$ by applying the one-dimensional-to-dimension-free reductions and unconstrained-to-constrained reductions of \citep{cutkosky2018black} to Algorithm \ref{alg:ons_1d}. Then by appling the reduction of this section and leveraging $\max_T \|\w_t\|\le D$, we obtain the regret bound:
\begin{align*}
R_T(\w)&\le \tilde O\left(\|\w\|\sqrt{\sum_{t=1}^T \|g_t\|_\star^2} + DG\right)
\end{align*}
Further, by applying these reductions to Algorithm 6 of \citep{cutkosky2018black} we can obtain:
\begin{align*}
R_T(\w) &\le \tilde O\left(\sqrt{\sum_{t=1}^T \|w_t-\w\|^2\|g_t\|_\star^2}+DG\right)
\end{align*}
The latter regret bound is of interest as it implies logarithmic regret on strongly-convex losses without requiring any knowledge of the strong-convexity parameter or any Lipschitz bounds. We expect that other algorithms involving bounded domains and Lipschitz bounds can also take advantage of this technique to remove the Lipschitz bound requirement.

\section{Step 3: Artificial Constraints}\label{sec:step3}



Returning to a one-dimensional problem, in this last step, we leverage the result of the previous section by preventing our algorithm from choosing $w_t$s with overly-large magnitudes. To gain some intuition for our strategy, suppose we can constrain the algorithm in the previous section to the set $\left[-\sqrt{\sum_{t=1}^T |g_t|},\sqrt{\sum_{t=1}^T |g_t|}\right]$, while still maintaining the same regret bound for any $\w$ in this interval\footnote{Note that we don't know this interval a priori - this is just a thought-experiment to gain intuition.}. 
This enforces $\max_t \|w_t\|\le \sqrt{\sum_{t=1}^T \|g_t\|}$, so that intuitively we have a bound of
\begin{align*}
R_T(\w)&\le \tilde O\left[\epsilon+G\sqrt{\sum_{t=1}^T |g_t|} +  |\w|\sqrt{\sum_{t=1}^T g_t^2 }+\epsilon\right]~.
\end{align*}
for any $\w$ with $|\w|\le \sqrt{\sum_{t=1}^T |g_t|}$. Thus it remains to address $\w$ outside the constraining interval.

For $\w$ outside the interval, we have
\begin{align*}
R_T(\w)&=\sum_{t=1}^T g_tw_t-g_t\w\\
&\le R_T(0) + |\w|\sum_{t=1}^T |g_t|\\
&\le R_T(0) + |\w|^3
\end{align*}
Where in the last step we used $|\w|\ge \sqrt{\sum_{t=1}^T |g_t|}$. Combining the two guarantees, with our result from the previous section for which $R_T(0)\le \epsilon$, we have for all $\w$:
\begin{align*}
R_T(\w)&\le \tilde O\left[|\w|^3+G\sqrt{\sum_{t=1}^T |g_t|} + G|\w|+\epsilon + \max(\mathfrak{g}, G)\|\w\| +  |\w|\sqrt{\sum_{t=1}^T g_t^2 }\right]~.
\end{align*}

There are two issues with this intuition that need to be addressed. First, we need to show how to restrict to the desired interval without affecting the regret bound for $\w$ inside the interval, and second we need to deal with the fact that we do not know the value of $\sum_{t=1}^T |g_t|$ apriori. We address the first issue by appealing to the constraint-set reduction of \citep{cutkosky2018black}, which provides exactly the desired mechanism (full details are reproduced in our proof of Theorem \ref{thm:final}). We address the second issue in greater generality by considering a $t$-varying constraint-set $\left [-k\left(\sum_{i=1}^{t-1} |g_i|\right)^p,k\left(\sum_{i=1}^{t-1}|g_i|\right)^p\right]$ for user-specified $p$ and $k$.

We present the pseudo-code for the final algorithm in Algorithm \ref{alg:fullyparameterfree}, \algname\ below.





\subsection{The Final Algorithm}\label{sec:final}

\begin{algorithm}[h]
\caption{\algname}
\label{alg:fullyparameterfree}
\begin{algorithmic}[1]
{
    \Require{Algorithm $\ol$ that takes hints, parameters $k,p,\mathfrak{g},\epsilon$}
    \State{{\bfseries Initialize: } initialize $h_1=\mathfrak{g}$, $G_0=0$, $B_1=0$}
    \State{Send initial hint $h_1$ to $\ol$}
    \For{$t=1$ {\bfseries to} $T$}
    \State{Get $w_t$ from $\ol$}
    \If{$|w_t|\ge B_t$}
    \State{//project to artificial constraint set $[-B_t, B_t]$}
    \State{$\tilde w_t\gets B_t\frac{w_t}{|w_t|}$}
    \Else
    \State{$\tilde w_t\gets w_t$}
    \EndIf
    \State{Play $\tilde w_t$, receive $g_t$}
    \State $G_t \gets \max(G_{t-1}, |g_t|)$.
    \State $h_{t+1}\gets \max(h_t, |g_t|)$.
    \State{//update artificial constraint}
    \State $B_{t+1}\gets k\left(\sum_{i=1}^{t} |g_t|/G_t\right)^p$
    \State{//deal with increasing gradient sizes}
    \If{$|g_t|\ge h_t$}
    \State{//replace $g_t$ with truncated version}
    \State{$g^{\trunc}_t\gets h_t\frac{g_t}{|g_t|}$}
    \Else
    \State{//no need to modify $g_t$}
    \State{$g^{\trunc}_t \gets g_t$}
    \EndIf
    \State{//modify gradient to respect artificial constraint}
    \State{Set $\tilde \ell_t(w) = \frac{1}{2}\left(g^{\trunc}_tw + |g^{\trunc}_t|\max(0,|w|-B_t\right))$}
    \State{Compute $\tilde g_t\in \partial \tilde\ell_t(w_t)$}
    \State{Send $\tilde g_t$ and $h_{t+1}$ to $\ol$}
    \EndFor
}
\end{algorithmic}
\end{algorithm}

\begin{theorem}\label{thm:final}
Suppose $\ol$ guarantees regret $R_T^{\ol}(\w,h_T)$ given gradients $\tilde g_1,\dots,\tilde g_T$ and hints $h_1\le\dots\le h_T$ such that $|\tilde g_t|\le h_t$. Then \algname\ obtains regret
\begin{align*}
R_T(\w)&\le  2R_T^{\ol}(\w, \max(\mathfrak{g},G)) + Gk\left[\max_{t\le T}\left(\sum_{t=i}^{t} |g_i|/G_t\right)^p\right] + 2G|\w| \\
&\quad\quad+ \min_{q\in [0,1]}G\left[\frac{|\w|^{1+\frac{1-q}{p}}}{k^{\frac{1-q}{p}}}\left(\sum_{t=1}^{T} |g_t|/G\right)^{q}\right]
\end{align*}
where $G=\max_{t\le T} |g_t|$.
\end{theorem}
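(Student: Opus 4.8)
The plan is to read \algname\ as the composition of two reductions already developed in the paper: the truncation reduction of Theorem~\ref{thm:unknowngt} (which lets $\ol$ run on hints $h_t=\max_{i<t}|g_i|$) and a constraint-set reduction enforced through the surrogate losses $\tilde\ell_t$. Since the algorithm actually plays the projected iterate $\tilde w_t$, the true regret is $R_T(\w)=\sum_t g_t(\tilde w_t-\w)$, and I would first split off the truncation error exactly as in Theorem~\ref{thm:unknowngt}:
\[
R_T(\w)=\sum_{t=1}^T g^{\trunc}_t(\tilde w_t-\w)+\sum_{t=1}^T (g_t-g^{\trunc}_t)(\tilde w_t-\w).
\]
The second sum is controlled by the identical argument: $\sum_t|g_t-g^{\trunc}_t|$ telescopes to at most $G$ using $h_t\ge G_{t-1}$, so the sum is at most $(|\w|+\max_t|\tilde w_t|)G$. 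Because $\tilde w_t$ is the projection of $w_t$ onto $[-B_t,B_t]$ we have $\max_t|\tilde w_t|\le\max_t B_t\le k\max_{t\le T}\big(\sum_{i=1}^t|g_i|/G_t\big)^p$, which yields the term $Gk\big[\max_{t\le T}(\sum_{i=1}^t|g_i|/G_t)^p\big]$ together with one copy of $G|\w|$.

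For the first sum I would prove the constraint-reduction inequality from convexity of $\tilde\ell_t$. Since $\tilde g_t\in\partial\tilde\ell_t(w_t)$ we have $\tilde g_t(w_t-\w)\ge\tilde\ell_t(w_t)-\tilde\ell_t(\w)$. A short case analysis on $|w_t|\le B_t$ versus $|w_t|>B_t$ and on the sign of $g^{\trunc}_t$ shows both $\tilde\ell_t(w_t)\ge\tfrac12 g^{\trunc}_t\tilde w_t$ and $|\tilde g_t|\le|g^{\trunc}_t|\le h_t$; combined with the explicit value $\tilde\ell_t(\w)=\tfrac12 g^{\trunc}_t\w+\tfrac12|g^{\trunc}_t|\max(0,|\w|-B_t)$ and rearranging, this gives
\[
g^{\trunc}_t(\tilde w_t-\w)\le 2\tilde g_t(w_t-\w)+|g^{\trunc}_t|\max(0,|\w|-B_t).
\]
The bound $|\tilde g_t|\le h_t$ certifies that the gradients fed to $\ol$ are legitimate for the hint game, so summing the $2\tilde g_t(w_t-\w)$ terms produces $2R_T^{\ol}(\w,h_T)$ with $h_T=\max(\mathfrak{g},G)$. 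It then remains to bound the penalty $\sum_t|g^{\trunc}_t|\max(0,|\w|-B_t)\le\sum_t|g_t|\max(0,|\w|-B_t)$.

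This penalty is the crux of the argument. The summand vanishes unless $B_t<|\w|$, i.e. $\sum_{i<t}|g_i|/G_{t-1}<(|\w|/k)^{1/p}$; letting $t^\star$ be the last such index and using $\max(0,|\w|-B_t)\le|\w|$,
\[
\sum_{t=1}^T|g_t|\max(0,|\w|-B_t)\le|\w|\Big(\textstyle\sum_{i<t^\star}|g_i|+|g_{t^\star}|\Big)\le|\w|\,\textstyle\sum_{i<t^\star}|g_i|+|\w|G,
\]
since $|g_{t^\star}|\le G$. The trailing $|\w|G$ combines with the truncation copy to form the $2G|\w|$ term. For the remaining piece I would use $\sum_{i<t^\star}|g_i|<G(|\w|/k)^{1/p}$ and interpolate: writing $\sum_{i<t^\star}|g_i|=(\sum_{i<t^\star}|g_i|)^{1-q}(\sum_{i<t^\star}|g_i|)^{q}$, bounding the first factor by $(G(|\w|/k)^{1/p})^{1-q}$ and the second by $(\sum_{t=1}^T|g_t|)^q$, gives exactly $G\frac{|\w|^{1+(1-q)/p}}{k^{(1-q)/p}}\big(\sum_t|g_t|/G\big)^q$ for every $q\in[0,1]$, and minimizing over $q$ closes the bound.

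The main obstacle I anticipate is this final interpolation step. The delicate points are separating off the single gradient $|g_{t^\star}|$ \emph{before} applying the H\"older-type split (so that one obtains a clean $|\w|G$ rather than a spurious $|\w|G(\sum_t|g_t|/G)^q$ that would not appear in the statement), and correctly tracking the normalization by $G_{t-1}$ versus $G$ in the definition of $B_t$ so that the exponents $1+(1-q)/p$ and $(1-q)/p$ land exactly as claimed. The convexity and case-checking in the constraint reduction, and the telescoping in the truncation error, I expect to be routine adaptations of Theorem~\ref{thm:unknowngt} and the constraint-set reduction of \citep{cutkosky2018black}.
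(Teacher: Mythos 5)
Your proposal is correct and follows essentially the same route as the paper's proof: the same split into truncation error (telescoping via $h_t\ge G_{t-1}$) plus the surrogate-loss constraint reduction, the same identification of the last index where $B_t<|\w|$, and the same H\"older-type interpolation with exponent $q$ (your penalty $|g^{\trunc}_t|\max(0,|\w|-B_t)$ is exactly the paper's $|g^{\trunc}_t||\w-\tilde{\w}_t|$ with $\tilde{\w}_t$ the projection of $\w$ onto $[-B_t,B_t]$). The steps you flag as delicate are handled in the paper exactly as you describe, so no gaps remain.
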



\begin{proof}
The additional components of this proof over that of Theorem \ref{thm:unknowngt} are inspired by the proof of the constraint-set reduction in \citep{cutkosky2018black} (Theorem 3), with some modification to deal with the time-varying constraints. 

First, we mirror the argument of Theorem \ref{thm:unknowngt}:

\begin{align*}
\sum_{t=1}^T g_t(\tilde w_t-\w)&\le \sum_{t=1}^T g^{\trunc}_t(\tilde w_t - \w) + (g_t - g^{\trunc}_t)(\tilde w_t-\w)\\
&\le \sum_{t=1}^T g^{\trunc}_t(\tilde w_t - \w) + \sum_{t=1}^T |g_t - g^{\trunc}_t|(\max_{t} |\tilde w_t| + |\w|)\\
&\le \sum_{t=1}^T g^{\trunc}_t(\tilde w_t - \w) + G\left[\max_t B_t + |\w|\right]
\end{align*}
Now we deal with the first term.  Let $\tilde{\w}_t$ be the projection of $\w$ to $\left[-B_t, B_t \right]$. Then:
\begin{align*}
\sum_{t=1}^T g^{\trunc}_t(\tilde w_t - \w)&\le \sum_{t=1}^T g^{\trunc}_t w_t + |g^{\trunc}_t||\tilde w_t-w_t|-(g^{\trunc}_t\w + |g^{\trunc}_t||\w- \tilde{\w}_t|) + |g^{\trunc}_t||\w- \tilde{\w}_t|\\
&=2\sum_{t=1}^T \tilde \ell_t(w_t) -\tilde\ell_t(\w) + \sum_{t=1}^T|g^{\trunc}_t||\w- \tilde{\w}_t|\\
&\le 2\sum_{t=1}^T \tilde g_t(w_t-\w) + \sum_{t=1}^T|g^{\trunc}_t||\w- \tilde{\w}_t|
\end{align*}
We will analyze these two sums separately. First, observe that $\tilde \ell_t$ is $|g^{\trunc}_t|$-Lipschitz, so that $|\tilde g_t|\le |g^{\trunc}_t|\le |g_t|\le h_t$ for all $t$. Therefore we have
\begin{align*}
2\sum_{t=1}^T \tilde g_t(w_t-\w)&\le 2R_T^{\ol}(\w,h_T)=2R_T^{\ol}(\w, \max(\mathfrak{g},G))
\end{align*}
Where we have observed that $w_t$ is generated by running $\ol$ on gradients $\tilde g_t$ (which satisfy $|\tilde g_t|\le |g_t|$). For the second sum, we have $|g^{\trunc}_t|\le |g_t|$ and $|\w-\tilde{\w}_t|\le |\w|$ so that
\begin{align*}
\sum_{t=1}^T|g^{\trunc}_t||\w- \tilde{\w}_t|&\le \sum_{t|\w\ne \tilde{\w}_t} |g_t||\w|\\
&\le \sum_{t||\w|\ge k\left(\sum_{i=1}^{t-1} |g_i|/G_{t-1}\right)^p} |g_t||\w|
\end{align*}
Let $\mathcal{T}$ be the largest value in $\{1,\dots,T\}$ such that $|\w|\ge B_{\mathcal{T}}= k\left(\sum_{i=1}^{\mathcal{T}-1} |g_t|/G_{\mathcal{T}-1}\right)^p$. Then 
\begin{align*}
\sum_{t=1}^T|g^{\trunc}_t||\w- \tilde{\w}_t|&\le G_T|\w|+ \sum_{t=1}^{\mathcal{T}-1} |g_t||\w|\\
&\le G_T|\w|+\min_{q\in [0,1]}\left[\left(\sum_{t=1}^{\mathcal{T}-1} |g_t|\right)^{q}\left(\sum_{t=1}^{\mathcal{T}-1} |g_t|\right)^{1-q}|\w|\right]\\
&\le G_T|\w|+\min_{q\in [0,1]}\left[\frac{|\w|^{1+\frac{1-q}{p}}}{k^{\frac{1-q}{p}}}G_{\mathcal{T}-1}^{1-q}\left(\sum_{t=1}^{\mathcal{T}-1} |g_t|\right)^{q}\right]\\
&\le G_T|\w|+\min_{q\in [0,1]}\left[\frac{|\w|^{1+\frac{1-q}{p}}}{k^{\frac{1-q}{p}}}G_T\left(\sum_{t=1}^{T} |g_t|/G_T\right)^{q}\right]\\
\end{align*}
where in the second step we used $|\w|\ge k\left(\sum_{i=1}^{\mathcal{T}-1} |g_t|/G_{\mathcal{T}-1}\right)^p$.

Putting all this together, we have
\begin{align*}
R_T(\w)&\le 2R_T^{\ol}(\w, \max(\mathfrak{g},G_T))\\
&\quad\quad + G_T\left[k\min_{t\le T}\left(\sum_{i=1}^{t} |g_i|/G_t\right)^p + 2|\w|\right] + \min_{q\in [0,1]}\left[G_T\frac{|\w|^{1+\frac{1-q}{p}}}{k^{\frac{1-q}{p}}}\left(\sum_{t=1}^{T} |g_t|/G_T\right)^{q}\right]
\end{align*}
\end{proof}

If we combine this reduction with our result from Section \ref{sec:step1}, we obtain the following:
\begin{corollary}\label{thm:detailedregret}
Applying the reduction of Algorithm \ref{alg:fullyparameterfree} to Algorithm \ref{alg:ons_1d}, we guarantee regret:
\begin{align*}
R_T(\w)&\le 2\epsilon + 2|\w| 
\max\left[8h_T \ln \left(\frac{16|\w| h_T\exp(\alpha/4h_T^2) \left(1+\frac{\sum_{t=1}^T g_t^2}{\alpha}\right)^{4.5}}{\epsilon}\right)-h_T,\right.\\
&\quad\quad\left.
 2\sqrt{\sum_{t=1}^T g_t^2 \ln\left(\frac{4 \left(\sum_{t=1}^T g_t^2\right)^{10}\exp(\alpha/4h_T^2) \w^2}{\epsilon^2}+1\right)}\right]\\
&\quad\quad + G\left[k\max_{t\le T}\left(\sum_{i=1}^t |g_i|/G_t\right)^p + 2|\w|\right] + \min_{q\in [0,1]}\left[G \frac{|\w|^{1+\frac{1-q}{p}}}{k^{\frac{1-q}{p}}}\left(\sum_{t=1}^{T} |g_t|/G\right)^{q}\right]
\end{align*}
where $G+\max_t |g_t|$ and $h_T=\max(\mathfrak{g}, G)$.
\end{corollary}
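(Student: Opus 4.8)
The plan is to treat Corollary \ref{thm:detailedregret} as a pure composition: Algorithm \ref{alg:ons_1d} plays the role of the base learner $\ol$ inside the reduction of Algorithm \ref{alg:fullyparameterfree}, so the proof amounts to substituting the guarantee of Theorem \ref{thm:onshints} for the term $R_T^{\ol}(\w,\max(\mathfrak{g},G))$ appearing in the bound of Theorem \ref{thm:final}. First I would note that the inputs actually fed to $\ol$ are the modified gradients $\tilde g_t$ together with the hints $h_t$, and that these satisfy the precondition $|\tilde g_t|\le h_t$ required by both Theorem \ref{thm:final} and Theorem \ref{thm:onshints} (this is exactly the chain $|\tilde g_t|\le |g^{\trunc}_t|\le h_t$ established inside the proof of Theorem \ref{thm:final}). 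Hence Theorem \ref{thm:onshints} applies verbatim with every occurrence of $g_t$ read as $\tilde g_t$, and with final hint $h_T=\max(\mathfrak{g},G)$.

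Next I would carry out the substitution mechanically. In the bound of Theorem \ref{thm:final} the three terms other than $2R_T^{\ol}(\w,\max(\mathfrak{g},G))$—namely the $Gk\max_{t\le T}(\cdots)^p$ term, the $2G|\w|$ term, and the $\min_{q}(\cdots)$ term—are already identical to the trailing terms of the Corollary, so they carry over verbatim. Plugging the bound of Theorem \ref{thm:onshints} into $2R_T^{\ol}$ then produces the leading $2\epsilon+2|\w|\max(\cdots)$ block: the factor $2$ distributes over $\epsilon$ and over the outer $|\w|\max(\cdots)$, and I would absorb the $-1$ inside the first branch into the stated $-h_T$ form, a harmless loosening since $8h_T(\ln(\cdot)-1)\le 8h_T\ln(\cdot)-h_T$.

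The one step that needs genuine (if brief) justification is that the guarantee of Theorem \ref{thm:onshints} is naturally phrased in terms of $\sum_{t=1}^T \tilde g_t^2$, whereas the Corollary is stated in terms of $\sum_{t=1}^T g_t^2$. To close this gap I would observe that $|\tilde g_t|\le |g_t|$ for every $t$, so $\sum_{t=1}^T\tilde g_t^2\le \sum_{t=1}^T g_t^2$, and that both branches of the $\max$ in Theorem \ref{thm:onshints} are monotonically nondecreasing in $\sum_{t=1}^T g_t^2$ (the first through the factor $(1+\sum g_t^2/\alpha)^{4.5}$, the second through the whole expression $\sqrt{\sum g_t^2\ln(\cdots\sum g_t^2\cdots)}$). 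Since $|\w|\ge 0$, monotonicity is preserved after multiplication by $2|\w|$, and it lets me replace $\sum\tilde g_t^2$ by $\sum g_t^2$ throughout the leading block, giving the displayed bound.

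I expect the main (modest) obstacle to be exactly this bookkeeping around which gradient sequence appears where—keeping $\tilde g_t$, $g^{\trunc}_t$, and $g_t$ straight and invoking monotonicity only where it is valid—rather than any new inequality. Once those identifications are made, the Corollary follows by assembling the two previously proved bounds.
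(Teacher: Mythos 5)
Your proposal is correct and matches the paper's (implicit) argument exactly: the paper offers no separate proof of Corollary~\ref{thm:detailedregret}, presenting it as the direct composition of Theorem~\ref{thm:final} with Theorem~\ref{thm:onshints}, which is precisely the substitution you carry out. Your added care about which gradient sequence ($\tilde g_t$ versus $g_t$) appears in the bound, justified via $|\tilde g_t|\le|g_t|$ and monotonicity of both branches of the $\max$ in $\sum_t g_t^2$, is a point the paper glosses over, and it is handled correctly.
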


\section{Discussion of Parameters}

Although our algorithm does not need to know the data-dependent parameters $\|\w\|$ and $G$, we nevertheless retain dependence on several user-specified parameters which we discuss in this section. In brief, the parameters are:
\begin{enumerate}
    \item $\epsilon$: The regret at the origin.
    \item $\mathfrak{g}$: Initial hint value, ideally this should be set to an under-estimate of $G$.
    \item $\alpha$: Initial regularizer for ONS.
    \item $k$ and $p$: These control how fast the values of $w_t$ are allowed to grow.
    \item $q$: This exists only for analysis purposes and controls the tradeoff between higher-order dependence on $|\w|$ and lower-order dependence on $T$.
\end{enumerate}

Of these parameters, we observe that $\epsilon$, $\mathfrak{g}$ and $\alpha$ appear only in logarithmic or sub-asymptotic terms. As a result, our algorithm is robust to these parameters. It remains to investigate $k,p$ and $q$, which we do by considering a few settings of interest already highlighted in the introduction.

\begin{enumerate}
\item With the setting $p=1/2$, $q=0$, our regret bound takes the form:
\begin{align*}
R_T(\w)&\le \tilde O\left((|\w|+k)G\sqrt{T} + G|\w|+G\frac{|\w|^3}{k^2}+\epsilon\right)
\end{align*}
\item With the setting $p=q=1/3$, our regret bound takes the form:
\begin{align*}
R_T(\w)&\le \tilde O\left(|\w|G\sqrt{T} + G|\w| + \left(\frac{|\w|^3}{k^2}+k\right)GT^{1/3}\right)
\end{align*}
\end{enumerate}
We note that in all cases it appears that the optimal value of $k$ is $O(|\w|)$, so that $k$ is playing a similar role to the scaling of a learning rate in gradient-descent style algorithms. However, the optimal $k$ does not depend on $T$ and so we retain $O(\sqrt{T})$ regret no matter what value is chosen for $k$. The second example above has the interesting property that for large enough $T$, the dominant term is $\tilde O(\|\w\|G\sqrt{T})$ for any fixed $\w\ne 0$ for \emph{any} choice of $k$ (we remove the $kG\sqrt{T}$ term), so that for large $T$ we obtain the optimal scaling with respect to $|\w|$ even for very small $|\w|$.

\section{Conclusion and Open Problems}

We have presented a new online convex optimization algorithm, \algname, which adapts to both unknown $\|\w\|$ and $G$ while guaranteeing sublinear regret. Although the only $T$-dependent term in \algname's regret bound matches the optimal bound of $\tilde O\left(\|\w\|\sqrt{\sum_{t=1}^T \|g_t\|_\star^2}\right)$, we avoid exponential lower bounds by adding a $T$-independent penalty $O(\|\w\|^3)$. Our algorithm's principle hyperparameter is the value $k$, which ``morally'' should be an estimate of $\|\w\|$. As a result, in the large-$T$ limit, our algorithm obtains regret that grows as $\tilde O(\|\w\|G\sqrt{T})$ without knowledge of either $\|\w\|$ or $G$.

There are at least two natural open problems suggested by this work. First, our technique provides a simple way to ``sidestep'' the lower-bound frontier of \citep{cutkosky2017online}, and so naturally suggests the question of whether there is an extension to this frontier that provides some guidance into whether our regret bounds are optimal. Second, our regret bound maintains a dependence on $\sqrt{\sum_{t=1}^T \|g_t\|_\star}$ rather than $\sqrt{\sum_{t=1}^T \|g_t\|_\star^2}$. The latter bound would provide much better behavior on smooth losses \citep{srebro2010smoothness}, and so we hope future work will yield such an improved algorithm.

\bibliographystyle{plainnat}
\bibliography{all}

\begin{thebibliography}{22}
\providecommand{\natexlab}[1]{#1}
\providecommand{\url}[1]{\texttt{#1}}
\expandafter\ifx\csname urlstyle\endcsname\relax
  \providecommand{\doi}[1]{doi: #1}\else
  \providecommand{\doi}{doi: \begingroup \urlstyle{rm}\Url}\fi

\bibitem[Abernethy et~al.(2008)Abernethy, Bartlett, Rakhlin, and
  Tewari]{abernethy2008optimal}
Jacob Abernethy, Peter~L Bartlett, Alexander Rakhlin, and Ambuj Tewari.
\newblock Optimal strategies and minimax lower bounds for online convex games.
\newblock In \emph{Proc. of the nineteenth annual conference on computational
  learning theory}, 2008.

\bibitem[Cutkosky and Boahen(2017)]{cutkosky2017online}
Ashok Cutkosky and Kwabena Boahen.
\newblock Online learning without prior information.
\newblock In Satyen Kale and Ohad Shamir, editors, \emph{Proc. of the 2017
  Conference on Learning Theory}, volume~65 of \emph{Proc. of Machine Learning
  Research}, pages 643--677, Amsterdam, Netherlands, 07--10 Jul 2017. PMLR.

\bibitem[Cutkosky and Boahen(2016)]{cutkosky2016online}
Ashok Cutkosky and Kwabena~A Boahen.
\newblock Online convex optimization with unconstrained domains and losses.
\newblock In \emph{Advances in Neural Information Processing Systems 29}, pages
  748--756, 2016.

\bibitem[Cutkosky and Orabona(2018)]{cutkosky2018black}
Ashok Cutkosky and Francesco Orabona.
\newblock Black-box reductions for parameter-free online learning in banach
  spaces.
\newblock \emph{arXiv preprint arXiv:1802.06293}, 2018.

\bibitem[Duchi et~al.(2010)Duchi, Hazan, and Singer]{duchi10adagrad}
J.~Duchi, E.~Hazan, and Y.~Singer.
\newblock Adaptive subgradient methods for online learning and stochastic
  optimization.
\newblock In \emph{Conference on Learning Theory (COLT)}, 2010.

\bibitem[Foster et~al.(2015)Foster, Rakhlin, and Sridharan]{foster2015adaptive}
Dylan~J Foster, Alexander Rakhlin, and Karthik Sridharan.
\newblock Adaptive online learning.
\newblock In \emph{Advances in Neural Information Processing Systems 28}, pages
  3375--3383. 2015.

\bibitem[Foster et~al.(2017)Foster, Kale, Mohri, and
  Sridharan]{foster2017parameter}
Dylan~J Foster, Satyen Kale, Mehryar Mohri, and Karthik Sridharan.
\newblock Parameter-free online learning via model selection.
\newblock In \emph{Advances in Neural Information Processing Systems}, pages
  6022--6032, 2017.

\bibitem[Hazan et~al.(2007)Hazan, Agarwal, and Kale]{hazan2007logarithmic}
Elad Hazan, Amit Agarwal, and Satyen Kale.
\newblock Logarithmic regret algorithms for online convex optimization.
\newblock \emph{Machine Learning}, 69\penalty0 (2-3):\penalty0 169--192, 2007.

\bibitem[Hazan et~al.(2008)Hazan, Rakhlin, and Bartlett]{hazan2008adaptive}
Elad Hazan, Alexander Rakhlin, and Peter~L Bartlett.
\newblock Adaptive online gradient descent.
\newblock In \emph{Advances in Neural Information Processing Systems}, pages
  65--72, 2008.

\bibitem[Littlestone(2014)]{littlestone2014line}
Nick Littlestone.
\newblock From on-line to batch learning.
\newblock In \emph{Proc. of the second annual workshop on Computational
  learning theory}, pages 269--284, 2014.

\bibitem[McMahan and Streeter(2012)]{mcmahan2012no}
Brendan McMahan and Matthew Streeter.
\newblock No-regret algorithms for unconstrained online convex optimization.
\newblock In \emph{Advances in neural information processing systems}, pages
  2402--2410, 2012.

\bibitem[McMahan(2014)]{mcmahan2014survey}
H.~Brendan McMahan.
\newblock A survey of algorithms and analysis for adaptive online learning.
\newblock \emph{arXiv preprint arXiv:1403.3465}, 2014.

\bibitem[McMahan et~al.(2013)McMahan, Holt, Sculley, Young, Ebner, Grady, Nie,
  Phillips, Davydov, Golovin, et~al.]{mcmahan2013ad}
H~Brendan McMahan, Gary Holt, David Sculley, Michael Young, Dietmar Ebner,
  Julian Grady, Lan Nie, Todd Phillips, Eugene Davydov, Daniel Golovin, et~al.
\newblock Ad click prediction: a view from the trenches.
\newblock In \emph{Proceedings of the 19th ACM SIGKDD international conference
  on Knowledge discovery and data mining}, pages 1222--1230. ACM, 2013.

\bibitem[Orabona(2013)]{orabona2013dimension}
Francesco Orabona.
\newblock Dimension-free exponentiated gradient.
\newblock In \emph{Advances in Neural Information Processing Systems}, pages
  1806--1814, 2013.

\bibitem[Orabona(2014)]{orabona2014simultaneous}
Francesco Orabona.
\newblock Simultaneous model selection and optimization through parameter-free
  stochastic learning.
\newblock In \emph{Advances in Neural Information Processing Systems}, pages
  1116--1124, 2014.

\bibitem[Orabona and P{\'a}l(2016{\natexlab{a}})]{orabona2016coin}
Francesco Orabona and D{\'a}vid P{\'a}l.
\newblock Coin betting and parameter-free online learning.
\newblock In \emph{Advances in Neural Information Processing Systems 29}, pages
  577--585, 2016{\natexlab{a}}.

\bibitem[Orabona and P{\'a}l(2016{\natexlab{b}})]{orabona2016scale}
Francesco Orabona and D{\'a}vid P{\'a}l.
\newblock Scale-free online learning.
\newblock \emph{arXiv preprint arXiv:1601.01974}, 2016{\natexlab{b}}.

\bibitem[Orabona and Tommasi(2017)]{orabona2017backprop}
Francesco Orabona and Tatiana Tommasi.
\newblock Backprop without learning rates through coin betting.
\newblock \emph{CoRR}, abs/1705.07795, 2017.
\newblock URL \url{http://arxiv.org/abs/1705.07795}.

\bibitem[Ross et~al.(2013)Ross, Mineiro, and Langford]{ross2013normalized}
Stephane Ross, Paul Mineiro, and John Langford.
\newblock Normalized online learning.
\newblock In \emph{Proc. of the 29th Conference on Uncertainty in Artificial
  Intelligence (UAI)}, 2013.

\bibitem[Shalev-Shwartz(2011)]{shalev2011online}
Shai Shalev-Shwartz.
\newblock Online learning and online convex optimization.
\newblock \emph{Foundations and Trends in Machine Learning}, 4\penalty0
  (2):\penalty0 107--194, 2011.

\bibitem[Srebro et~al.(2010)Srebro, Sridharan, and
  Tewari]{srebro2010smoothness}
Nathan Srebro, Karthik Sridharan, and Ambuj Tewari.
\newblock Smoothness, low noise and fast rates.
\newblock In \emph{Advances in Neural Information Processing Systems 23}, pages
  2199--2207, 2010.

\bibitem[Zinkevich(2003)]{zinkevich2003online}
Martin Zinkevich.
\newblock Online convex programming and generalized infinitesimal gradient
  ascent.
\newblock In \emph{Proc. of the 20th International Conference on Machine
  Learning (ICML-03)}, pages 928--936, 2003.

\end{thebibliography}

\appendix

\section{Dimension-Free Bound in Banach Spaces}\label{sec:dimfree}
In this section, we observe that by use of the one-dimensional to dimension-free reduction proposed by \citep{cutkosky2018black}, we may seamlessly convert the result of Theorem \ref{thm:final} into a dimension-free regret bound, resulting in Algorithm \ref{alg:finaldim}. We give pseudo-code for this reduction for completeness below. 

\begin{algorithm}[h]
\caption{Dimension-Free \algname}
\label{alg:finaldim}
\begin{algorithmic}[1]
{
    \Require{Parameters $k,p,\mathfrak{g},\epsilon,\tau$, Banach space $W$.}
    \State{{\bfseries Initialize: } Instantiate \algname\ with Algorithm \ref{alg:ons_1d}, $k$, $p$ and $\mathfrak{g}$ as $\onedol$. Instantiate an adaptive unit-ball algorithm $\bol$}
    \For{$t=1$ {\bfseries to} $T$}
    \State{Get $x_t$ from $\onedol$}
    \State{Get $y_t$ from $\bol$}
    \State Play $w_t=x_ty_t$.
    \State Receive gradient $g_t$.
    \State Send $g_t$ to $\bol$.
    \State Send $\langle g_t, y_t\rangle$ to $\onedol$.
    \EndFor
}
\end{algorithmic}
\end{algorithm}

\begin{corollary}\label{thm:detailedregretdim}
Suppose $\bol$ guarantees regret $R^{\bol}_T(z)$ for any $z$ in the unit ball.
Then Dimension-Free \algname\ guarantees regret:
\begin{align*}
R_T(\w)&\le 2\epsilon + 2\|\w\| 
\max\left[8h_T \ln \left(\frac{16\|\w\| h_T\exp(\alpha/4h_T^2) \left(1+\frac{\sum_{t=1}^T \|g_t\|^2}{\alpha}\right)^{4.5}}{\epsilon}\right)-h_T,\right.\\
&\quad\quad\left.
 2\sqrt{\sum_{t=1}^T \|g_t\|_\star^2 \ln\left(\frac{4 \left(\sum_{t=1}^T \|g_t\|_\star^2\right)^{10}\exp(\alpha/4h_T^2) \w^2}{\epsilon^2}+1\right)}\right]\\
&\quad\quad + G\left[k\max_{t\le T}\left(\sum_{i=1}^t \|g_i\|_\star/G_t\right)^p + 2|\w|\right] + \min_{q\in [0,1]}\left[G \frac{\|\w\|^{1+\frac{1-q}{p}}}{k^{\frac{1-q}{p}}}\left(\sum_{t=1}^{T} \|g_t\|/G\right)^{q}\right]\\
&\quad\quad+R^{\bol}_T(\w/\|\w\|)
\end{align*}

where $G=\max_t \|g_t\|_\star$ and $h_T=\max(\mathfrak{g}, G)$.
\end{corollary}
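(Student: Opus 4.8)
The plan is to treat this corollary purely as an instantiation of the one-dimensional-to-dimension-free reduction of \citep{cutkosky2018black} that Algorithm \ref{alg:finaldim} implements: $\onedol$ is the one-dimensional \algname\ whose regret is controlled by Corollary \ref{thm:detailedregret}, and $\bol$ is an arbitrary adaptive unit-ball learner. The only genuine content is to show that the regret of the composed algorithm splits into a one-dimensional piece, to which Corollary \ref{thm:detailedregret} applies essentially verbatim, plus a contribution from $\bol$; after that the displayed bound is obtained by substituting the $\onedol$ guarantee at the correct scalar comparator and scalar gradients.

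First I would set up the standard magnitude/direction decomposition. For $\w\ne 0$ write $\w=\|\w\|\,u$ with $u=\w/\|\w\|$ on the unit sphere (the case $\w=0$ reduces to the $R_T(0)\le\epsilon$ bound already contained in Corollary \ref{thm:detailedregret}, with the $\bol$ term dropping out). By construction $\onedol$ receives the scalar losses $\langle g_t,y_t\rangle$ and plays $x_t$ with $w_t=x_ty_t$, while $\bol$ receives the vector losses $g_t$ and plays $y_t$ in the unit ball, so by definition of the two regrets,
\[
\sum_{t=1}^T \langle g_t, y_t\rangle\,(x_t - \|\w\|) \le R^{\onedol}_T(\|\w\|), \qquad \sum_{t=1}^T \langle g_t, y_t - u\rangle \le R^{\bol}_T(u).
\]

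Next I would combine these by the usual add-and-subtract argument. Using $\langle g_t,w_t\rangle = x_t\langle g_t,y_t\rangle$, the first inequality gives $\sum_t \langle g_t,w_t\rangle \le R^{\onedol}_T(\|\w\|) + \|\w\|\sum_t\langle g_t,y_t\rangle$; feeding the second inequality (multiplied by $\|\w\|\ge 0$) into the trailing sum and using $\|\w\|\langle g_t,u\rangle=\langle g_t,\w\rangle$ yields
\[
R_T(\w)=\sum_{t=1}^T\langle g_t, w_t-\w\rangle \le R^{\onedol}_T(\|\w\|) + \|\w\|\,R^{\bol}_T(\w/\|\w\|).
\]
This reduction skeleton is independent of the internal workings of $\onedol$, and it is exactly the additive $\bol$-term appearing on the last line of the claimed bound (modulo the normalization of $R^{\bol}_T$).

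Finally I would plug in the concrete $\onedol$ guarantee. The crucial observation is that $\|y_t\|\le 1$ forces $|\langle g_t,y_t\rangle|\le \|g_t\|_\star$, so \emph{every} gradient-dependent quantity appearing in Corollary \ref{thm:detailedregret} — the energy $\sum_t\langle g_t,y_t\rangle^2$, the first-order sums $\sum_t|\langle g_t,y_t\rangle|$, the running maxima $G_t$, and the hint $h_T$ — is dominated by its dual-norm counterpart $\sum_t\|g_t\|_\star^2$, $\sum_t\|g_t\|_\star$, and $\max_t\|g_t\|_\star$. Evaluating Corollary \ref{thm:detailedregret} at the scalar comparator $\|\w\|$ and replacing each of these quantities by its dual-norm upper bound then reproduces the stated expression. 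I expect the main (and only) obstacle to be the \textbf{monotonicity check}: the one-dimensional bound is a messy function in which the gradient energy sits inside logarithms, square roots, and fractional powers, so before substituting $\langle g_t,y_t\rangle$-quantities by $\|g_t\|_\star$-quantities I must verify that the bound is nondecreasing in each such argument. Once that monotonicity is confirmed and the scalar comparator $\|\w\|$ is threaded correctly through the $|\w|$-dependent terms, the result follows mechanically.
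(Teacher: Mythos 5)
Your proposal is correct and takes the same route as the paper, which offers no separate proof of this corollary beyond invoking the one-dimensional-to-dimension-free reduction of \citep{cutkosky2018black} --- precisely the magnitude/direction decomposition $R_T(\w)\le R^{\onedol}_T(\|\w\|)+\|\w\|\,R^{\bol}_T(\w/\|\w\|)$ that you derive, followed by substituting the bound of Corollary \ref{thm:detailedregret} using $|\langle g_t,y_t\rangle|\le\|g_t\|_\star$. Note that your derivation produces the factor $\|\w\|$ in front of $R^{\bol}_T(\w/\|\w\|)$, consistent with the reduction in \citep{cutkosky2018black}, whereas the corollary as printed omits it --- you are right to flag this normalization issue --- and the monotonicity check you defer is likewise glossed over by the paper itself (it is not entirely innocuous, e.g.\ $h_T$ appears inside $\exp(\alpha/4h_T^2)$ and $G_t$ appears in denominators such as $\sum_{i\le t}\|g_i\|_\star/G_t$), so your treatment is no less complete than the source.
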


As an important special case, when $W$ is a Hilbert space we can obtain $R^{\bol}_T(z)\le 2^{3/2}\sqrt{\sum_{t=1}^T \|g_t\|^2}$ via standard Adagrad-style analysis (which we reproduce in Section \ref{sec:adagrad} for completeness).

\section{Proof of Theorem \ref{thm:onshints}}
We restate Theorem \ref{thm:onshints} below for reference:
\onshints*
\begin{proof}
First, we recall the connection between wealth and regret. If we can prove $\wealth_T\ge f\left(-\sum_{t=1}^T g_t\right)$ for some function $f$, then we have:
\begin{align*}
R_T(\w) \le \epsilon -\w\sum_{t=1}^T g_t -f\left(-\sum_{t=1}^T g_t\right)\le \sup_{X} \epsilon + X\w-f(X)=\epsilon + f^\star(\w)
\end{align*}
where $f^\star$ is the Fenchel conjugate of $f$. Thus it suffices to prove a lower-bound on the wealth of our algorithm.

Define $\wealth(\v)$ as the wealth of an algorithm that uses betting fraction $\v$ on every round. Then we have the recursions:
\begin{align*}
\log(\wealth_T)&=\log(\epsilon) +\sum_{t=1}^T \log(1-v_tg_t)\\
\log(\wealth(\v))&=\log(\epsilon) +\sum_{t=1}^T \log(1-\v g_t)\\
\end{align*}
Now suppose we choose $v_t$ via an online learning algorithm on the losses $-\log(1-vg_t)$, obtaining regret $R_T^v(\v)$. Subtracting the log-wealth equations and exponentiating, we have
\begin{align*}
\wealth_T &\ge \frac{\wealth(\v)}{\exp(R_T^v(\v))}
\end{align*}
Choose $\v = \frac{\sum_{t=1}^T g_t}{2\sum_{t=1}^T g_t^2 + 2h_T|\sum_{t=1}^T g_t|}\in[-1/2h_T, 1/2h_T]$. Then, using $\log(1+x)\ge x-x^2$ for $|x|\le 1/2$, we have
\begin{align*}
\log(\wealth(\v))&\ge \log(\epsilon) +\frac{|\sum_{t=1}^T g_t|^2}{4\sum_{t=1}^T g_t^2 + 4h_T|\sum_{t=1}^T g_t|}
\end{align*}
which implies
\begin{align*}
\wealth_T &\ge \epsilon\frac{\exp\left(\frac{|\sum_{t=1}^T g_t|^2}{4\sum_{t=1}^T g_t^2 + 4h_T|\sum_{t=1}^T g_t|}\right)}{\exp(R_T^v(\v))}
\end{align*}

Now it remains to compute $R_T^v(\v)$. In the standard ONS bound, this is $O(\log(T))$. However, our setting is slightly more subtle because we have the shrinking domains $S_t=[-1/2h_t,1/2h_t]$. It turns out that this has essentially zero effect on the analysis, but we recapticulate the argument in Section \ref{sec:ons} for completeness (see Lemma \ref{thm:log_bound}). The final result is that
\begin{align*}
R^v_T(\v)&\le \frac{\alpha}{4h_T^2} + 4.5\log\left(\frac{\alpha+\sum_{t=1}^T g_t^2}{\alpha}\right)
\end{align*}
from which we obtain
\begin{align*}
\wealth_T &\ge \epsilon\frac{\exp\left(\frac{|\sum_{t=1}^T g_t|^2}{4\sum_{t=1}^T g_t^2 + 4h_T|\sum_{t=1}^T g_t|}\right)}{\exp(\alpha/4h_T^2)\left(\frac{\alpha+\sum_{t=1}^T g_t^2}{\alpha}\right)^{4.5}}
\end{align*}

Set $a=\frac{\epsilon}{\exp(1/4h_T^2)\left(\frac{\alpha+\sum_{t=1}^T g_t^2}{\alpha}\right)^{4.5}}$
, $b= \frac{\alpha}{4h_T}$ and $c=\frac{\sum_{t=1}^T g_t^2}{h_T}$. Then we can write
\begin{align*}
\wealth_T &\ge a\exp\left(b\frac{(\sum_{t=1}^T g_t)^2}{|\sum_{t=1}^T g_t|+c}\right)
\end{align*}
so if we define $f=a\exp\left(b\frac{x^2}{|x|+c}\right)$, we have 
\begin{align*}
R_T(\w)&\le \epsilon + f^\star(\w)
\end{align*}
We recall the computation of $f^\star$ in Lemma \ref{lemma:fenchel_exp2}, to obtain:
\begin{align*}
R_T(\w) &\le \epsilon + |\w| 
\max\left(\frac{2}{b} \left(\ln \frac{2|\w|}{a b}-1\right), \sqrt{\frac{c}{b} \ln\left(\frac{c \w^2}{a^2 b}+1\right)}-a\right)\\
&\le \epsilon + |\w| 
\max\left(8h_T \left(\ln \frac{16|\w| h_T\exp(\alpha/4h_T^2) \left(\frac{\alpha+\sum_{t=1}^T g_t^2}{\alpha}\right)^{4.5}}{\epsilon}-1\right),\right.\\
&\quad\quad\left.
 2\sqrt{\sum_{t=1}^T g_t^2 \ln\left(\frac{4 \left(\sum_{t=1}^T g_t^2\right)^{10}\exp(\alpha/2G_T^2) \w^2}{\epsilon^2}+1\right)}\right)
\end{align*}

\end{proof}

\section{1D ONS with shrinking domains}\label{sec:ons}
Essentially all of the analysis here is identical to the classical procedure (e.g. see \citep{hazan2007logarithmic}), but we recall it here to verify that the shrinking domains have little effect.

\begin{algorithm}[t]
\caption{ONS with shrinking domains}
\label{algorithm:onsdomains}
\begin{algorithmic}[1]
{
    \Require{$\tau,\beta>0$}
    \State{{\bfseries Initialize: } Interval $S_1\subset \R$, $v_1 = 0\in S_1$}
    \For{$t=1$ {\bfseries to} $T$}
    \State{Play $v_t$}
    \State{Receive $z_t$}
    \State{Receive interval $S_{t+1}\subset S_t$}
    \State{Set $A_{t} = \tau+ \sum_{i=1}^t z_i^2$}
    \State{$v_{t+1} = \Pi_{S_{t+1}} \left(v_{t} - \frac{z_t}{\beta A_t}\right)$, where $\Pi_{S_{t+1}}(x)$ is the projection of $x$ to $S_{t+1}$ (i.e. a truncation).}
    \EndFor
}
\end{algorithmic}
\end{algorithm}

\begin{theorem}\label{thm:onsfirstpass}
For any $\v\in S_T$,
\[
\sum_{t=1}^T \left(z_t(v_t - \v)-\frac{\beta}{2}[z_t(v_t -\v)]^2\right)
\leq \frac{\beta \tau}{2}\v^2 + \frac{2}{\beta} \sum_{t=1}^T \frac{z_t^2}{A_t}~.
\]
\end{theorem}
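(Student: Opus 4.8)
The plan is to run the standard Online Newton Step regret analysis, treating the $z_t$ as abstract one-dimensional subgradients, and to check at each round that the nesting $S_1 \supseteq S_2 \supseteq \dots \supseteq S_T$ does no damage. Write $u_{t+1} = v_t - \frac{z_t}{\beta A_t}$ for the pre-projection iterate, so that $v_{t+1} = \Pi_{S_{t+1}}(u_{t+1})$.

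The first step is the per-round inequality. Since the comparator satisfies $\v \in S_T \subseteq S_{t+1}$ for every $t \le T-1$, projection onto $S_{t+1}$ is non-expansive toward $\v$, giving $(v_{t+1}-\v)^2 \le (u_{t+1}-\v)^2$. Expanding the right-hand side and rearranging yields
\[
z_t(v_t - \v) \le \frac{\beta A_t}{2}\left[(v_t - \v)^2 - (v_{t+1} - \v)^2\right] + \frac{z_t^2}{2\beta A_t}.
\]
For the final round $t=T$ I would not invoke the projection at all (so that only $\v \in S_T$, and not $\v \in S_{T+1}$, is needed): expanding the identity for $(u_{T+1}-\v)^2$ and using $(u_{T+1}-\v)^2 \ge 0$ gives the same inequality with $(v_{T+1}-\v)^2$ replaced by the nonnegative quantity $(u_{T+1}-\v)^2$, which is discarded momentarily.

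Summing over $t$ and applying Abel summation to $\sum_t A_t[(v_t-\v)^2 - (v_{t+1}-\v)^2]$ is the second step. Using $A_t - A_{t-1} = z_t^2$ (with $A_0 := \tau$), the initial condition $v_1 = 0$, and discarding the nonnegative endpoint term (which is $A_T(u_{T+1}-\v)^2 \ge 0$ after the last-round modification), the telescoped sum collapses to $\tau\v^2 + \sum_{t=1}^T z_t^2 (v_t-\v)^2$. Recognizing $z_t^2(v_t-\v)^2 = [z_t(v_t-\v)]^2$ and moving the $\frac{\beta}{2}\sum_t [z_t(v_t-\v)]^2$ contribution to the left-hand side produces
\[
\sum_{t=1}^T\left(z_t(v_t - \v) - \frac{\beta}{2}[z_t(v_t - \v)]^2\right) \le \frac{\beta\tau}{2}\v^2 + \frac{1}{2\beta}\sum_{t=1}^T \frac{z_t^2}{A_t},
\]
and since $\frac{1}{2\beta} \le \frac{2}{\beta}$ for $\beta>0$, this is exactly the claimed bound (the stated constant is simply slack).

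The main obstacle, and the only place the shrinking domains enter, is the per-round non-expansiveness step: one must confirm the comparator stays feasible for every projection actually used. This is guaranteed precisely by the hypothesis $\v \in S_T$ together with the nesting $S_{t+1} \supseteq S_T$, and the last round is finessed by never projecting there. Everything else is identical to the fixed-domain ONS argument, so the shrinking domains indeed have essentially no effect. The only remaining care is bookkeeping in the Abel summation — lining up the $z_t^2$ increments with the $(v_t-\v)^2$ factors and using $v_1 = 0$ to turn $(v_1-\v)^2$ into the stated $\v^2$.
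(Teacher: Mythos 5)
Your proposal is correct and follows essentially the same route as the paper's proof: the same pre-projection iterate, the same per-round inequality via non-expansiveness of $\Pi_{S_{t+1}}$ justified by $\v \in S_T \subseteq S_{t+1}$, the same unprojected treatment of the final round (the paper sets $v_{T+1}=x_{T+1}$ for exactly this reason), and the same telescoping with $A_t - A_{t-1} = z_t^2$ and $v_1=0$. Your constant $\frac{1}{2\beta}$ is in fact slightly tighter than the paper's $\frac{2}{\beta}$, and your observation that the stated bound has slack is accurate.
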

\begin{proof}
Define $x_{t+1}=v_t - \frac{z_t}{\beta A_t}$ so that $v_{t+1}=\Pi_{S_{t+1}}(x_{t+1})$ for $t<T$. We make the definition $v_{T+1}=x_{T+1}$ for ease of analysis later.
Then, we have
\begin{align*}
x_{t+1} - \v = v_t - \v - \frac{z_t}{\beta A_t},
\end{align*}
that implies
\begin{align*}
A_{t}(x_{t+1} - \v) = A_{t}(v_t - \v - \frac{z_t}{\beta A_t}) = A_{t}(v_t - \v) - \frac{1}{\beta} z_t,
\end{align*}
and
\begin{align*}
A_{t}(x_{t+1} - \v)^2&= (A_{t}(v_t - \v) - \frac{1}{\beta} z_t)(x_{t+1} - \v) \\
&= A_{t}(v_t - \v)(x_{t+1} - \v) - \frac{1}{\beta}z_t(x_{t+1} - \v) \\
&= A_{t}(v_t - \v)(x_{t+1} - \v) - \frac{1}{\beta}  z_t(v_t - \v - \frac{z_t}{\beta A_t} ) \\
&= A_{t}(v_t - \v)(x_{t+1} - \v) - \frac{1}{\beta} z_t(v_t - \v)  + \frac{z_t^2 }{\beta^2 A_t} \\
&= A_{t}(v_t - \v)(v_t - \v - \frac{z_t }{\beta A_t} ) - \frac{1}{\beta}z_t(v_t - \v) + \frac{z_t^2}{\beta^2 A_t} \\
&= A_{t}(v_t - \v)^2 - \frac{2}{\beta} (v_t - \v)z_t + \frac{z_t^2}{\beta^2 A_t}
\end{align*}
We now use the definition of $\Pi_{S_{t+1}}$, and the assumption that $\v \in S_T\subset S_{t+1}$ to have:
\[
(x_{t+1} - \v)^2 \geq (v_{t+1} - \v)^2
\]
from which we conclude:
\begin{align*}
z_t(v_t - \v)
&\leq \frac{\beta A_t}{2} (v_t - \v)^2 - \frac{\beta}{2}A_{t}(v_{t+1} - \v)^2   + \frac{2z_t^2 }{\beta A_t}
\end{align*}
Summing over $t=1,\cdots,T$, we have
\begin{align*}
\sum_{t=1}^T z_t(v_t - \v)
&\leq \frac{\beta}{2} A_{1}(v_1 - \v)^2 + \frac{\beta}{2}\sum_{t=2}^T (A_{t}-A_{t-1})(v_t - \v)^2 \\
&\quad - \frac{\beta (v_{T+1} - \v)^2}{2 A_T} + \sum_{t=1}^T \frac{2 z_t^2}{\beta A_t} \\
&\leq \frac{\beta}{2} A_{1}(v_1 - \v)^2 +\frac{\beta}{2}\sum_{t=2}^T z_t^2(v_t-v)^2 + \sum_{t=1}^T \frac{2 z_t^2 }{\beta A_t} \\
&= \frac{\beta}{2} \tau \v^2 +\frac{\beta}{2}\sum_{t=1}^T [z_t(v_t-\v)]^2 + \sum_{t=1}^T \frac{2z_t^2}{\beta A_t} 
\end{align*}
\end{proof}

Next we need to bound the sum $ \sum_{t=1}^T \frac{2z_t^2}{\beta A_t}$, which is easy thanks to the concavity of $\log$:
\begin{lemma}\label{thm:logsum}
\[
\sum_{t=1}^T \frac{z_t^2}{ A_t}\le \log\left(1+\frac{\sum_{t=1}^T z_t^2}{\tau}\right)
\]
\end{lemma}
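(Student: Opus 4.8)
The plan is to exploit the concavity of $\log$ together with the fact that consecutive denominators differ by exactly the numerator. Writing $A_0 = \tau$, the definition $A_t = \tau + \sum_{i=1}^t z_i^2$ gives $A_t - A_{t-1} = z_t^2$, so each summand can be rewritten as an increment $z_t^2/A_t = (A_t - A_{t-1})/A_t$. A tangent-line bound for $\log$ then converts this increment into a telescoping difference of logarithms.

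First I would record the elementary inequality that, because $\log$ is concave, its graph lies below each of its tangent lines: for all $x,y>0$,
\[
\log(y) \le \log(x) + \frac{y-x}{x}.
\]
Applying this with $x = A_t$ and $y = A_{t-1}$ and rearranging yields
\[
\frac{z_t^2}{A_t} = \frac{A_t - A_{t-1}}{A_t} \le \log(A_t) - \log(A_{t-1}).
\]

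Next I would sum over $t = 1,\dots,T$. The right-hand side telescopes to $\log(A_T) - \log(A_0) = \log(A_T) - \log(\tau)$. Substituting $A_T = \tau + \sum_{t=1}^T z_t^2$ gives
\[
\sum_{t=1}^T \frac{z_t^2}{A_t} \le \log\left(\frac{A_T}{\tau}\right) = \log\left(1 + \frac{\sum_{t=1}^T z_t^2}{\tau}\right),
\]
which is exactly the claim.

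There is no substantial obstacle here; the argument is a clean application of concavity plus telescoping. The only points requiring a moment's care are getting the direction of the tangent-line inequality right, so that $1/A_t$ (rather than $1/A_{t-1}$) appears and matches the summand we wish to bound, and handling the boundary term $A_0 = \tau$ correctly so that it supplies precisely the $\log(\tau)$ subtracted inside the final logarithm. Everything else is immediate.
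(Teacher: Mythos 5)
Your proof is correct and is essentially identical to the paper's: both use the concavity bound $\log(a+b)-\log(a)\ge \frac{b}{a+b}$ (your tangent-line inequality at $x=A_t$) to convert each term $z_t^2/A_t$ into a difference of logarithms, and then telescope. No issues.
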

\begin{proof}
Since $\log(x)$ is concave and $\frac{d}{dx}\log(x)=\frac{1}{x}$, we have $\log(a+b)- \log(a)\ge \frac{b}{a+b}$. Therefore for any $K$ we have
\begin{align*}
\log\left(\frac{\tau+\sum_{t=1}^{K+1} z_t^2}{\tau}\right)-\log\left(\frac{\tau+\sum_{t=1}^K z_t^2}{\tau}\right)&\ge \frac{z_{K+1}^2}{\tau+\sum_{t=1}^{K+1} z_t^2}\\
&=\frac{z_{K+1}^2}{A_{K+1}}
\end{align*}
Summing this identity over all $K<T$ proves the result.
\end{proof}

Here are three lemmas copied over (with occasional mild modification) from \citep{cutkosky2018black}:

\begin{lemma}\label{thm:logbound}
\label{eq:upper_bound_log}
For $-1< x\leq2$, we have
\[
\ln(1+x)\leq x-\frac{2-\ln(3)}{4}x^2~.
\]
\end{lemma}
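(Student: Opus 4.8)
The plan is to prove this as an elementary one-variable calculus fact. Set $c = \frac{2-\ln(3)}{4}$ and define $f(x) = x - cx^2 - \ln(1+x)$ on the interval $(-1,2]$; the claim is exactly that $f(x)\ge 0$ there. The first thing I would check is that the constant $c$ is engineered so that the inequality is tight at the two endpoints that govern the behavior: a direct computation gives $f(0)=0$, and using the explicit value of $c$ one finds $f(2) = 2 - 4c - \ln 3 = 2 - (2-\ln 3) - \ln 3 = 0$. So the inequality is an equality at $x=0$ and at $x=2$, and the whole game is to show that $f$ never dips below zero in between.

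The key step is to analyze $f'$ by clearing the denominator. Since $f'(x) = 1 - 2cx - \frac{1}{1+x}$, multiplying through by $1+x>0$ gives $(1+x)f'(x) = x(1 - 2c - 2cx)$, so that
\[
f'(x) = \frac{x\bigl(1 - 2c - 2cx\bigr)}{1+x} = \frac{-2c\,x(x-x_1)}{1+x}, \qquad x_1 = \frac{1-2c}{2c}.
\]
I would then verify the two easy numerical facts $0 < x_1 < 2$, which amount to $\frac{1}{6} < c < \frac{1}{2}$: the upper bound is trivial, and the lower bound is equivalent to $\ln 3 < 4/3$, i.e. $3 < e^{4/3}$, which holds. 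With this factorization the sign of $f'$ on $(-1,2]$ follows immediately from the sign of $x(x-x_1)$: $f$ is decreasing on $(-1,0)$, increasing on $(0,x_1)$, and decreasing again on $(x_1,2]$.

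Finally I would assemble these monotonicity facts against the two tight values. On $(-1,0]$ the function $f$ decreases down to $f(0)=0$, so $f\ge 0$ there; on $[0,x_1]$ it increases away from $f(0)=0$, so $f\ge 0$; and on $[x_1,2]$ it decreases from its interior local maximum down to $f(2)=0$, so $f\ge f(2)=0$. Hence $f\ge 0$ on all of $(-1,2]$, which is the claim. I do not anticipate a genuine obstacle: the argument is routine once the derivative is factored. The only real content is recognizing that the constant $\frac{2-\ln 3}{4}$ is precisely what forces equality at $x=2$, which both explains where the constant comes from and tells us that the sign analysis on the final subinterval $(x_1,2]$ must be exactly tight at the right endpoint.
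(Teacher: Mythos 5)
Your argument is correct and complete. The paper itself gives no proof of this lemma --- it is stated as one of the ``lemmas copied over from \citep{cutkosky2018black}'' and left unproved here --- so there is no in-paper argument to compare against; your elementary calculus proof is a valid self-contained verification. The key steps all check out: with $c=\frac{2-\ln 3}{4}\approx 0.225$ one indeed has $f(0)=f(2)=0$ (the constant is exactly calibrated to make $x=2$ tight), the factorization $(1+x)f'(x)=x(1-2c-2cx)$ is right, and the two numerical inequalities $c<\tfrac12$ and $c>\tfrac16$ (equivalently $\ln 3<\tfrac43$) place the second critical point $x_1=\frac{1-2c}{2c}\approx 1.22$ strictly inside $(0,2)$, so the decreasing/increasing/decreasing pattern pinned to the two zero values of $f$ gives $f\ge 0$ on all of $(-1,2]$. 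One small presentational suggestion: since the sign analysis on $(-1,0)$ only uses $1+x>0$, it is worth stating explicitly that $f\to+\infty$ as $x\to-1^+$ is consistent with, but not needed for, the argument --- monotonic decrease to $f(0)=0$ already suffices.
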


\begin{lemma}\label{thm:beta}
Define $\ell_t(v)=-\ln(1 - g_tv )$.
Let $|\v|, |v| \leq \frac{1}{2G_t}$ and $|g_t|  \leq G_t$. Then
\begin{align*}
\ell_t(v)-\ell_t(\v)
\leq \ell'_t(v) (v-\v ) -\frac{2-\ln(3)}{2}\frac{1}{2}[ \ell'_t(v)(v-\v)]^2~.
\end{align*}
\end{lemma}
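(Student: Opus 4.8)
The plan is to recognize this inequality as the standard exp-concavity bound for the coin-betting loss and to reduce it to the elementary logarithmic estimate of Lemma~\ref{thm:logbound}. First I would compute the derivative $\ell_t'(v)=\frac{g_t}{1-g_tv}$ and then rewrite the left-hand side so as to expose the quantity $\ell_t'(v)(v-\v)$. Concretely,
\begin{align*}
\ell_t(v)-\ell_t(\v)=\ln\left(\frac{1-g_t\v}{1-g_tv}\right)=\ln\left(1+\frac{g_t(v-\v)}{1-g_tv}\right)=\ln\bigl(1+\ell_t'(v)(v-\v)\bigr).
\end{align*}
Setting $x=\ell_t'(v)(v-\v)$, the claim becomes exactly $\ln(1+x)\le x-\frac{2-\ln(3)}{4}x^2$, which is precisely Lemma~\ref{thm:logbound}, provided I can check the admissibility condition $-1<x\le 2$.

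The crux is therefore verifying that $x$ lies in the range $(-1,2]$ under the hypotheses $|v|,|\v|\le\frac{1}{2G_t}$ and $|g_t|\le G_t$. From these I obtain $|g_tv|\le\frac12$ and $|g_t\v|\le\frac12$, so both $1-g_tv$ and $1-g_t\v$ lie in $[\tfrac12,\tfrac32]$ and are in particular strictly positive. The lower bound $x>-1$ then follows immediately, since $1+x=\frac{1-g_t\v}{1-g_tv}$ is a ratio of two positive numbers and hence positive. For the upper bound, the same containment gives $1+x=\frac{1-g_t\v}{1-g_tv}\le\frac{3/2}{1/2}=3$, so $x\le2$. (Equivalently, $|g_t(v-\v)|\le|g_t|(|v|+|\v|)\le1$ together with $|1-g_tv|\ge\frac12$ yields $|x|\le2$, and positivity of $1+x$ pins down the sign.)

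With the range established, applying Lemma~\ref{thm:logbound} gives
\begin{align*}
\ell_t(v)-\ell_t(\v)=\ln(1+x)\le x-\frac{2-\ln(3)}{4}x^2=\ell_t'(v)(v-\v)-\frac{2-\ln(3)}{2}\cdot\frac12\bigl[\ell_t'(v)(v-\v)\bigr]^2,
\end{align*}
which is the asserted bound. I would present the steps in exactly this order: derivative, algebraic rewriting into the form $\ln(1+x)$, verification of the interval $-1<x\le 2$, and finally invocation of Lemma~\ref{thm:logbound}. The only genuine obstacle is the interval check for $x$; everything else is a direct substitution. I expect no difficulty even there, since the constraints $|v|,|\v|\le\frac{1}{2G_t}$ are calibrated precisely so that the denominator stays bounded away from zero while the numerator increment stays at most $1$ in magnitude, making the boundary value $x=2$ attainable and the condition tight.
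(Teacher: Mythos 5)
Your proof is correct and follows essentially the same route as the paper: rewrite $\ell_t(v)-\ell_t(\v)$ as $\ln\bigl(1+\ell_t'(v)(v-\v)\bigr)$, verify that the argument lies in $(-1,2]$ using $|g_tv|,|g_t\v|\le\tfrac12$, and invoke Lemma~\ref{thm:logbound}. Your range check is if anything slightly cleaner than the paper's (you note $1-g_t\v\ge\tfrac12>0$, which gives the strict inequality $x>-1$ that Lemma~\ref{thm:logbound} actually requires, where the paper only states $1-g_t\v\ge 0$).
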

\begin{proof}
We have
\[
\ln(1- g_t\v) 
= \ln(1-g_t v +  g_t(v-\v) )
= \ln(1- g_tv) + \ln\left(1 + \frac{g_t(v-\v)}{1-g_tv}\right)~.
\]
Now, observe that since $1-g_t\v \ge 0$ and $1-g_tv \ge 0$, $1 + \frac{ g_t(v-\v)}{1-g_tv}\ge 0$ as well so that $\frac{ g_t(v-\v)}{1-g_tve}\ge -1$. Further, since $|\v-v|\le \frac{1}{G_t}$ and $1-g_tv \ge 1/2$, $\frac{g_t(v-\v)}{1-g_t v }\le 2$. Therefore, by Lemma \ref{thm:logbound} we have
\[
\ln(1-g_t\v)
\le \ln(1-g_t v) + \frac{ g_t(v-\v)}{1-g_t v} - \frac{2-\ln(3)}{4} \frac{[g_t(v-\v)]^2}{(1- g_tv)^2}~.
\]
Using the fact that $\ell'_t(v)=\frac{g_t}{1- g_tv}$ finishes the proof.
\end{proof}

\begin{lemma}\label{thm:log_bound}
Define $\ell_t(v):[-1/2h_t,1/2h_t]\rightarrow \R$ as $\ell_t(v)=-\ln(1- g_t v)$, where $|g_t|\le h_t$.
If we run ONS in Algorithm~\ref{algorithm:onsdomains} with $\beta=\frac{2-\ln(3)}{2}$, $\tau=4\alpha$, and $S_t=[-1/2h_t, 1/2h_t]$, then for all $\v\in S_T$,
\[
\sum_{t=1}^T \ell_t(v_t)-\ell_t(\v)
\leq \frac{\alpha}{4h_T^2} + 4.5\log\left(\frac{\alpha+\sum_{t=1}^T g_t^2}{\alpha}\right)
\]
\end{lemma}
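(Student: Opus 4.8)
The plan is to chain together the three preliminary results (Theorem~\ref{thm:onsfirstpass}, Lemma~\ref{thm:logsum}, and Lemma~\ref{thm:beta}) in the standard ONS regret template, and then verify that the specific constants $\beta=\frac{2-\ln(3)}{2}$ and $\tau=4\alpha$ collapse the bound into the clean form stated. The losses $\ell_t(v)=-\ln(1-g_tv)$ are exp-concave on the relevant domain, so the first move is to linearize them: replace $\ell_t(v_t)-\ell_t(\v)$ by a linear term minus a quadratic penalty, which is exactly the left-hand side that Theorem~\ref{thm:onsfirstpass} controls.

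Concretely, I would first check the hypotheses of Lemma~\ref{thm:beta} with $G_t$ set to $h_t$. Since $v_t\in S_t=[-1/2h_t,1/2h_t]$ and the comparator satisfies $\v\in S_T\subseteq S_t$ (the domains shrink monotonically, and $h_T\ge h_t$), both $|v_t|$ and $|\v|$ are at most $1/2h_t$, while $|g_t|\le h_t$ by assumption. Writing $z_t=\ell'_t(v_t)=\frac{g_t}{1-g_tv_t}$ as in Algorithm~\ref{algorithm:onsdomains}, Lemma~\ref{thm:beta} then yields $\ell_t(v_t)-\ell_t(\v)\le z_t(v_t-\v)-\frac{\beta}{2}[z_t(v_t-\v)]^2$ with precisely this $\beta$. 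Summing over $t$ and invoking Theorem~\ref{thm:onsfirstpass} bounds the total regret by $\frac{\beta\tau}{2}\v^2+\frac{2}{\beta}\sum_{t=1}^T\frac{z_t^2}{A_t}$, and Lemma~\ref{thm:logsum} replaces the last sum by $\log\!\left(1+\frac{\sum_{t=1}^T z_t^2}{\tau}\right)$.

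It then remains to substitute the constants. For the bias term, $\tau=4\alpha$ together with $\v^2\le \frac{1}{4h_T^2}$ (as $\v\in S_T$) gives $\frac{\beta\tau}{2}\v^2\le \frac{(2-\ln(3))\alpha}{4h_T^2}\le \frac{\alpha}{4h_T^2}$, using $2-\ln(3)<1$. For the logarithmic term, I would pass from $z_t^2$ to $g_t^2$ via the domain bound $|g_tv_t|\le \frac{|g_t|}{2h_t}\le \frac12$, so that $1-g_tv_t\ge \frac12$ and hence $z_t^2=\frac{g_t^2}{(1-g_tv_t)^2}\le 4g_t^2$; this makes the argument of the logarithm at most $1+\frac{\sum_{t=1}^T g_t^2}{\alpha}=\frac{\alpha+\sum_{t=1}^T g_t^2}{\alpha}$. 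Finally $\frac{2}{\beta}=\frac{4}{2-\ln(3)}<4.5$, which produces the claimed coefficient and completes the bound.

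I do not expect a genuine obstacle here, since all the real work is already absorbed into Theorem~\ref{thm:onsfirstpass} and the supporting lemmas; the only point requiring care is the shrinking-domain bookkeeping. The projection inequality $(x_{t+1}-\v)^2\ge(v_{t+1}-\v)^2$ driving Theorem~\ref{thm:onsfirstpass} is valid only when $\v$ lies in every intermediate set $S_{t+1}$, and this is guaranteed precisely because $\v\in S_T$ is the smallest (final) interval and the sets are nested. Thus the ``shrinking'' of the domain introduces no additional loss, and the analysis reduces to the fixed-domain case up to the constant-checking above.
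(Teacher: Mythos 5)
Your proposal is correct and follows essentially the same route as the paper's own proof: linearize via Lemma~\ref{thm:beta}, apply Theorem~\ref{thm:onsfirstpass} and Lemma~\ref{thm:logsum}, use $1-g_tv_t\ge 1/2$ to get $z_t^2\le 4g_t^2$, and finish by checking that $2\beta\alpha\v^2\le \alpha/4h_T^2$ and $2/\beta<4.5$. Your explicit verification of the hypotheses of Lemma~\ref{thm:beta} and of the nestedness needed for the projection step is exactly the ``shrinking-domain bookkeeping'' the paper relies on.
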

\begin{proof}
By Lemma \ref{thm:beta}, we have:
\[
\sum_{t=1}^T \ell_t(v_t)-\ell_t(\v)\le \sum_{t=1}^T \ell'_t(v_t) (v-\v ) -\frac{2-\ln(3)}{2}\frac{1}{2}[ \ell'_t(v_t)(v_t-\v)]^2
\]
Then set $z_t = \ell'_t(v_t)$ and use Theorem \ref{thm:onsfirstpass} to obtain
\[
\sum_{t=1}^T \ell_t(v_t)-\ell_t(\v)
\leq 2\beta\alpha \v^2 + \frac{2}{\beta} \sum_{t=1}^T \frac{z_t^2}{A_t}~.
\]
Next, apply Lemma \ref{thm:logsum}:
\[
\sum_{t=1}^T \ell_t(v_t)-\ell_t(\v)
\leq \frac{\beta\alpha}{2h_T^2} + \frac{2}{\beta}\log\left(1+\frac{\sum_{t=1}^T z_t^2}{4\alpha}\right)
\]
Now we observe that $|z_t|=\left|\frac{g_t}{1-g_tv_t}\right|\le 2|g_t|$ so that $\sum_{t=1}^T z_t^2 \le 4\sum_{t=1}^t g_t^2$, yielding
\[
\sum_{t=1}^T \ell_t(v_t)-\ell_t(\v)
\leq \frac{\beta\alpha}{2h_T^2} + \frac{2}{\beta}\log\left(\frac{\alpha+\sum_{t=1}^T g_t^2}{\alpha}\right)
\]
Finally, numerically evaluate $\beta$ to conclude the bound.
\end{proof}

\begin{lemma}[Lemma 19 of \citep{cutkosky2018black}]
\label{lemma:fenchel_exp2}
Let $f(x)=a \exp(b \frac{x^2}{|x|+c})$, where $a,b>0$ and $c\geq0$. Then 
\[
f^\star (\theta) 
\leq |\theta| 
\max\left(\frac{2}{b} \left(\ln \frac{2|\theta|}{a b}-1\right), \sqrt{\frac{c}{b} \ln\left(\frac{c \theta^2}{a^2 b}+1\right)}-a\right)~.
\]
\end{lemma}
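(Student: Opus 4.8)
The plan is to compute the Fenchel conjugate $f^\star(\theta)=\sup_x\left(\theta x-f(x)\right)$ directly from its definition. Since $f$ is even, for $\theta\ge 0$ the supremum is attained at some $x^\star\ge 0$ and the general case follows by symmetry, which is where the leading $|\theta|$ factor comes from. So I would fix $\theta\ge 0$, restrict attention to $x\ge 0$, and write $f(x)=a\exp(b\,\psi(x))$ with $\psi(x)=\tfrac{x^2}{x+c}$. The function $\psi$ is increasing and convex on $[0,\infty)$ and interpolates between two clean regimes: for large $x$ it is essentially linear, and for small $x$ essentially quadratic. This dichotomy is exactly what produces the two arguments of the $\max$.

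Concretely, I would first record the two one-sided lower bounds $\psi(x)\ge \tfrac{x}{2}$ for $x\ge c$ (since then $x+c\le 2x$) and $\psi(x)\ge \tfrac{x^2}{2c}$ for $x\le c$ (since then $x+c\le 2c$). Let $x^\star$ be the maximizer of $\theta x-f(x)$. If $x^\star\ge c$ then $f(x^\star)\ge g_1(x^\star)$ with $g_1(x)=a\exp(\tfrac{b}{2}x)$, so $f^\star(\theta)=\theta x^\star-f(x^\star)\le \theta x^\star-g_1(x^\star)\le g_1^\star(\theta)$; if instead $x^\star\le c$, the same manipulation against $g_2(x)=a\exp(\tfrac{b}{2c}x^2)$ gives $f^\star(\theta)\le g_2^\star(\theta)$. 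Either way $f^\star(\theta)\le\max\!\left(g_1^\star(\theta),g_2^\star(\theta)\right)$. The conjugate $g_1^\star$ is elementary: the optimality condition $\theta=g_1'(x)$ gives $x=\tfrac{2}{b}\ln\tfrac{2\theta}{ab}$ and hence $g_1^\star(\theta)=\tfrac{2\theta}{b}\left(\ln\tfrac{2\theta}{ab}-1\right)$, which is exactly the first argument of the stated $\max$.

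The remaining and main obstacle is the second argument: the conjugate $g_2^\star$ of the Gaussian-type function $g_2(x)=a\exp(\lambda x^2)$ with $\lambda=\tfrac{b}{2c}$, which has no closed form. Here I would control the maximizer rather than evaluate it exactly. The optimality condition $\theta=2a\lambda x e^{\lambda x^2}$ lets me eliminate the exponential by writing $\tfrac{\theta^2}{2a^2\lambda}=2u e^{2u}$ for $u=\lambda(x^\star)^2$, and then the elementary inequality $e^{y}-1\le y e^{y}$ (immediate from $e^{y}-1=\int_0^y e^t\,dt\le ye^{y}$) yields $e^{2u}\le 2ue^{2u}+1$ and therefore $u\le \tfrac12\ln\!\left(\tfrac{\theta^2}{2a^2\lambda}+1\right)$. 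Translating back gives $x^\star\le\sqrt{\tfrac{c}{b}\ln\!\left(\tfrac{c\theta^2}{a^2b}+1\right)}$, and bounding the value $\theta x^\star-a e^{\lambda(x^\star)^2}$ using $e^{\lambda(x^\star)^2}\ge 1$ produces the second argument of the $\max$. The delicate point I expect to spend the most care on is reconciling the constants across the two regimes: the crude bound for $g_2^\star$ can exceed the stated second term precisely when $\theta$ is large, but that is exactly the regime in which $x^\star\ge c$ and the first term dominates, so I must verify that the case split is aligned with the $\max$ and that the additive offset in the second term (which the value bound supplies as $\theta S-a$ rather than $\theta S-a\theta$) is respected in the small-$\theta$ regime where the second term is actually the active one.
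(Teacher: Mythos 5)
First, a caveat: the paper itself contains no proof of this lemma---it is imported verbatim as Lemma 19 of \citep{cutkosky2018black}---so there is no in-paper argument to compare yours against. Your strategy (split on whether the maximizer $x^\star$ of $\theta x - f(x)$ lies above or below $c$, lower-bound $f$ by $g_1(x)=a\exp(bx/2)$ or $g_2(x)=a\exp(bx^2/2c)$ accordingly, and bound the two conjugates separately) is the natural one and matches the style of the cited source. The first branch is exact: $g_1^\star(\theta)=\tfrac{2\theta}{b}(\ln\tfrac{2\theta}{ab}-1)$. Your handling of the Gaussian branch via the optimality condition and $e^{y}-1\le ye^{y}$ is also correct and does give $x^\star\le S:=\sqrt{\tfrac{c}{b}\ln(\tfrac{c\theta^2}{a^2b}+1)}$.

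The gap is exactly the one you flag in your last sentence, and your proposed resolution does not work. Your case-2 bound is $\theta S - a$, while the stated second branch is $|\theta|(S-a)=\theta S - a\theta$; the former implies the latter only when $\theta\le 1$. You suggest that for larger $\theta$ one is automatically in the regime $x^\star\ge c$ where the first branch takes over, but the boundary between the two regimes is governed by $c$, not by $\theta=1$. Concretely, take $a=b=1$, $c=100$, $\theta=5$: the maximizer of $\theta x-f(x)$ sits near $x^\star\approx 18.5<c$, the first branch is $10(\ln 10-1)\approx 13.0$, the stated bound is $\theta S-a\theta\approx 134.9$, and your provable quantity $\theta S-a\approx 138.9$ exceeds it. So your argument establishes the slightly weaker inequality $f^\star(\theta)\le\max\left(\tfrac{2|\theta|}{b}\left(\ln\tfrac{2|\theta|}{ab}-1\right),\ |\theta|S-a\right)$ rather than the statement as printed. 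For the purposes of this paper the discrepancy is immaterial (the $-a$ term is simply dropped in the application within the proof of Theorem \ref{thm:onshints}), but as a proof of the lemma as literally stated the final step is not closed: you would need either a sharper bound on $\theta x^\star-g_2(x^\star)$ in the regime $\theta>1$, $x^\star\le c$, or to prove the weaker form and observe that it suffices downstream.
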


\section{Adaptive Unit-Ball Algorithm in Hilbert Spaces}\label{sec:adagrad}

Here we briefly recall some classic analysis of adaptive mirror descent algorithms. For simplicity, we only consider the Hilbert space case, rather than a more general smooth Banach space. More details and more generality can be found in \citep{duchi10adagrad, mcmahan2014survey, hazan2008adaptive}.

\begin{algorithm}[t]
\caption{Adaptive Gradient Descent}
\label{alg:adagrad}
\begin{algorithmic}[1]
{
    \State{{\bfseries Initialize: } Unit ball $B$ in some Hilbert space, $w_1=0\in B$, $\lambda=\sqrt{2}$}
    \For{$t=1$ {\bfseries to} $T$}
    \State{Play $w_t$}
    \State{Receive $g_t$}
    \State{Set $\eta_t = \frac{\lambda}{\sqrt{\sum_{t=1}^T \|g_t\|^2}}$}
    \State{Set $w_{t+1} = \prod_{B}(w_t - \eta_tg_t)$ // $\prod_B(x) = \argmin_{y\in B}\|y-x\|$.}
    \EndFor
}
\end{algorithmic}
\end{algorithm}

\begin{theorem}\label{thm:adagrad}
Algorithm \ref{alg:adagrad} guarantees
\begin{align*}
R_T(\w)\le 2^{3/2}\sqrt{\sum_{t=1}^T \|g_t\|^2}
\end{align*}
for all $\w\in B$.
\end{theorem}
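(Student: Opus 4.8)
The plan is to run the textbook adaptive online gradient descent analysis, taking care to track how the time-varying learning rate $\eta_t = \lambda/\sqrt{\sum_{s\le t}\norm{g_s}^2}$ interacts with the telescoping terms. Since the losses are linear, $R_T(\w) = \sum_{t=1}^T \langle g_t, w_t - \w\rangle$, so it suffices to bound this sum. First I would write down the one-step contraction inequality: because $\w \in B$ and projection onto the convex set $B$ is non-expansive,
\[
\norm{w_{t+1} - \w}^2 \le \norm{w_t - \eta_t g_t - \w}^2 = \norm{w_t - \w}^2 - 2\eta_t \langle g_t, w_t - \w\rangle + \eta_t^2\norm{g_t}^2.
\]
Rearranging gives the per-round regret bound
\[
\langle g_t, w_t - \w\rangle \le \frac{\norm{w_t-\w}^2 - \norm{w_{t+1}-\w}^2}{2\eta_t} + \frac{\eta_t}{2}\norm{g_t}^2.
\]

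Second, I would sum this over $t$ and split the result into a telescoping piece and a gradient piece. For the telescoping piece $\sum_t \frac{\norm{w_t-\w}^2 - \norm{w_{t+1}-\w}^2}{2\eta_t}$, the key observation is that $1/\eta_t = \sqrt{\sum_{s\le t}\norm{g_s}^2}/\lambda$ is nondecreasing in $t$, so summation by parts rewrites the sum as $\frac{\norm{w_1 - \w}^2}{2\eta_1} + \sum_{t\ge 2}\norm{w_t - \w}^2\left(\frac{1}{2\eta_t} - \frac{1}{2\eta_{t-1}}\right) - \frac{\norm{w_{T+1}-\w}^2}{2\eta_T}$. Using the unit-ball diameter bound $\norm{w_t - \w} \le 2$ (hence $\norm{w_t - \w}^2 \le 4$) together with the nonnegativity of the increments $\frac{1}{\eta_t} - \frac{1}{\eta_{t-1}}$, the whole piece collapses to at most $\frac{4}{2\eta_T} = \frac{2}{\eta_T} = \frac{2}{\lambda}\sqrt{\sum_t\norm{g_t}^2}$.

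Third, for the gradient piece $\frac{\lambda}{2}\sum_t \frac{\norm{g_t}^2}{\sqrt{\sum_{s\le t}\norm{g_s}^2}}$, I would invoke the standard AdaGrad inequality $\sum_{t} \frac{a_t}{\sqrt{\sum_{s\le t} a_s}} \le 2\sqrt{\sum_t a_t}$, valid for any $a_t \ge 0$ and following from concavity of $\sqrt{\cdot}$, which yields a bound of $\lambda\sqrt{\sum_t \norm{g_t}^2}$.

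Finally, adding the two pieces gives $R_T(\w) \le \left(\frac{2}{\lambda} + \lambda\right)\sqrt{\sum_t\norm{g_t}^2}$, and substituting $\lambda = \sqrt{2}$ makes the constant $\frac{2}{\sqrt 2} + \sqrt 2 = 2\sqrt 2 = 2^{3/2}$, as claimed. I do not anticipate a genuine obstacle, as the argument is entirely standard; the one step demanding care is the telescoping bound, which only collapses cleanly because $1/\eta_t$ is monotone and the domain is bounded. Thus one must explicitly justify discarding the final negative term and bounding each squared distance by the squared diameter, rather than naively telescoping as one would with a fixed learning rate.
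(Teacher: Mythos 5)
Your proof is correct and follows essentially the same route as the paper's: the non-expansive projection step, the same summation-by-parts telescoping bounded via the unit-ball diameter $\|w_t-\w\|^2\le 4$ to give $2/\eta_T$, the same concavity-based bound $\sum_t a_t/\sqrt{\sum_{s\le t}a_s}\le 2\sqrt{\sum_t a_t}$, and the same optimization of $\lambda=\sqrt{2}$ to get $2^{3/2}$. Your explicit care with the monotonicity of $1/\eta_t$ and the discarded negative terminal term is exactly the justification the paper leaves implicit.
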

\begin{proof}
\begin{align*}
\|w_{t+1}-\w\|^2 &\le \|w_t - \eta_t g_t -\w\|^2\\
&=\|w_t-\w\|^2 + 2\eta_t \langle g_t, w_t-\w\rangle + \eta_t^2 \|g_t\|^2\\
\langle g_t, w_t-\w\rangle &\le \frac{\|w_t-\w\|^2 - \|w_{t+1}-\w\|^2}{2\eta_t} + \frac{\eta_t}{2}\|g_t\|^2\\
R_T(\w) & \le \frac{\|w_1-\w\|^2}{2\eta_1} - \frac{\|\w_{T+1} -\w\|^2}{\eta_T} +\sum_{t=2}^T \frac{\|w_t-\w\|^2}{2}(\eta_t^{-1} - \eta_{t-1}^{-1}) + \sum_{t=1}^T \frac{\eta_t}{2}\|g_t\|^2\\
&\le \frac{2}{\eta_T} + \lambda \sqrt{\sum_{t=1}^T \|g_t\|^2}\\
&\le \left(\frac{2}{\lambda} + \lambda\right)\sqrt{\sum_{t=1}^T \|g_t\|^2}\\
&= 2^{3/2}\sqrt{\sum_{t=1}^T \|g_t\|^2}
\end{align*}
where we have used the identity $\sum_{t=1}^T \frac{\|g_t\|^2}{\sqrt{\sum_{i=1}^T \|g_i\|^2}}\le 2\sqrt{\sum_{t=1}^T \|g_t\|^2}$, which holds by concavity of the square root.
\end{proof}

\end{document}